\newtheorem{theorem}{Theorem}
\theoremstyle{remark}
\theoremstyle{definition}
\title{Scalable Graph Networks for Particle Simulations}
\author {
    % Authors
        Karolis Martinkus,\textsuperscript{\rm 1}
        Aurelien Lucchi, \textsuperscript{\rm 1}
        Nathanaël Perraudin \textsuperscript{\rm 2} \\
}
\begin{document}
\frenchspacing

\maketitle

\begin{abstract}
Learning system dynamics directly from observations is a promising direction in machine learning due to its potential to significantly enhance our ability to understand physical systems. 
However, the dynamics of many real-world systems are challenging to learn due to the presence of nonlinear potentials and a number of interactions that scales quadratically with the number of particles $N$, as in the case of the N-body problem. In this work, we introduce an approach that transforms a fully-connected interaction graph into a hierarchical one which reduces the number of edges to $O(N)$. This results in linear time and space complexity while the pre-computation of the hierarchical graph requires $O(N\log (N))$ time and $O(N)$ space. Using our approach, we are able to train models on much larger particle counts, even on a single GPU. We evaluate how the phase space position accuracy and energy conservation depend on the number of simulated particles. Our approach retains high accuracy and efficiency even on large-scale gravitational N-body simulations which are impossible to run on a single machine if a fully-connected graph is used. Similar results are also observed when simulating Coulomb interactions. Furthermore, we make several important observations regarding the performance of this new hierarchical model, including: i) its accuracy tends to improve with the number of particles in the simulation and ii) its generalisation to unseen particle counts is also much better than for models that use all $O(N^2)$ interactions.
\end{abstract}

\section{Introduction}
\noindent The ability to simulate complex systems is invaluable to many fields of science and engineering. Constructing simulators for such systems by hand can be very labour intensive or even impossible if the underlying processes are not understood or no sufficiently precise and accurate approximations of the interactions are known. To address this, various data-driven methods for learning system dynamics have been investigated \cite{battaglia2016interaction,  mrowca2018flexible, li2018learning,  greydanus2019hamiltonian, sanchezgonzalez2019hamiltonian, sanchezgonzalez2020learning, finzi2020generalizing}. However, they either use only local interactions between close-by particles or they simulate systems with just a few particles. Unfortunately, this does not capture many real-world scenarios, where systems are comprised of thousands of particles that interact over long distances. Using only local interactions would cause very high errors in such cases.

In this contribution, we focus in particular on the N-body problem, since it cannot be solved to sufficient accuracy with only local information. The nonlinearity of the interactions and the fact that the system is chaotic if the number of particles $N > 2$ \cite{roy2012predictability} makes this problem particularly complicated. This complexity can be seen in the original work on Hamiltonian Neural Networks \cite{greydanus2019hamiltonian} where the performance on the 2-body problem was good, but significantly deteriorated on the 3-body problem.

We propose a hierarchical model\footnote{Code available at: \url{https://git.io/JtUXt}} which builds on top of existing graph network (GN) architectures such as \cite{battaglia2018relational} and previous work on accurate physical simulations of a few particles \cite{sanchezgonzalez2019hamiltonian}. Our hierarchical architecture is physically motivated by the multipole expansion and inspired by the fast multipole method (FMM) \cite{greengard1988rapid}. Our method allows us to extend existing models and simulate complex systems that require $O(N^2)$ interactions with thousands of particles which we have observed empirically to be infeasible when a fully connected graph is used. Importantly, models that use our hierarchical approach retain similar accuracy to models working with a fully connected graph and a smaller particle count.

Finally, we note that the fast multipole method and multipole expansion have been applied to various other problems, such as flow simulations \cite{koumoutsakos1995high}, acoustics \cite{gunda2008boundary}, molecular dynamics \cite{board1992accelerated, ding1992atomic} and even interpolation of scattered data (reconstruction of a 3D object mesh) \cite{carr2001reconstruction}. This suggests that our hierarchical method should also facilitate learning on a similarly wide range of problems.

\section{Related Work}
Recent studies show that neural networks can successfully learn to simulate complex physical processes \cite{battaglia2016interaction, sanchez2018graph, mrowca2018flexible, li2018learning,  greydanus2019hamiltonian, sanchezgonzalez2019hamiltonian, sanchezgonzalez2020learning}. Most of the existing work focuses on introducing better physical biases such as a relational model \cite{battaglia2016interaction},  conservation law bias \cite{greydanus2019hamiltonian}, combining these with an ODE bias \cite{sanchezgonzalez2019hamiltonian} and various similar refinements  \cite{chen2019symplectic, zhong2019symplectic, saemundsson2020variational, desai2020vign}. Unfortunately, all of these highly accurate and noise resistant models are only able to work with tens of particles in complex spring or gravitational systems.
% \cite{battaglia2016interaction, greydanus2019hamiltonian, sanchezgonzalez2019hamiltonian}
Models which are focused on simulating fluids, rigid and deformable bodies are able to work with thousands of particles \cite{mrowca2018flexible, li2018learning, sanchezgonzalez2020learning}. However, in these scenarios, it is possible to achieve state of the art performance by just using local information \cite{sanchezgonzalez2020learning}. 

There is a long history of attempts to improve the scalability of traditional particle simulations.
One of the first efficient methods was the tree code introduced by \cite{barnes1986hierarchical}. It uses a hierarchical spatial tree (i.e. quadtree) to define localised groups of particles. Particles then interact directly with other groups (instead of their member particles) if the distance to the group is sufficiently larger than the radius encompassing all of the particles in that group, to avoid large errors. Interaction list for each particle would be computed by going through the tree top-down and at each level adding all of the cells that are sufficiently far away and haven't been covered by previous interactions to the list.
At the lowest level, particles from neighbouring cells would interact directly. This algorithm has $O(N\log(N))$ complexity. Fast multipole method (FMM) \cite{greengard1987fast} improved this approach by adding cell-cell interactions and then propagating the resulting effects to the children. This algorithm has $O(N)$ complexity for the force estimation. It has been shown by \cite{dehnen2014fast} that by carefully tweaking the implementation of the FMM it is possible to achieve comparable gravitational force errors to a direct summation method that computes all $O(N^2)$ interactions.

In this work, we apply the lessons learned in scaling traditional particle simulations to current approaches of learning simulations from data using graph neural networks.

There have been attempts to incorporate a hierarchical structure to neural simulators of fluids, rigid and deformable bodies \cite{mrowca2018flexible, li2018learning}. However, in those cases, the focus was to improve the accuracy and effect propagation in simulations with only local interactions between particles. The resulting methods are not as scalable and are ill-suited for simulations of long-range force field interactions. \cite{li2018learning} restricted hierarchy to one level of groups. \cite{mrowca2018flexible} included edges between children and all of their grandparents in the hierarchy. This resulted in $O(N \log N)$ complexity when propagating information through their hierarchical graph. To construct the hierarchy, recursive applications of k-means clustering were used, which can be more computationally expensive than using a quadtree or an octree \cite{arthur2006slow}. In these hierarchies, nodes only interact with other nodes in the same group. This can result in two particles that are next to each other having no direct interaction, even though close interactions are the most important ones. We also know from physics that cluster-cluster force field interactions can only be approximated if the clusters are sufficiently far away \cite{dehnen2014fast}. Nevertheless, neighbouring clusters often interact directly in these hierarchies. These issues make existing hierarchical approaches ill-suited for force field interactions.
This is unsurprising, as they were designed for a fundamentally different problem.
% From physics, we know that an approximation of group-group force field interaction is only possible when the clusters are sufficiently far away from each other \cite{dehnen2014fast}. While in these hierarchies neighbouring clusters would often interact with each other. This makes the existing hierarchical approaches ill-suited for simulation of force field interactions.

Successful non-graph-based neural simulators have also been proposed \cite{ummenhofer2019lagrangian, finzi2020generalizing}. However, they only use local information from the particle's neighbourhood to update its position.

\section{Model}
\begin{figure}[h]
    \centering
    \includegraphics[width=1.0\columnwidth]{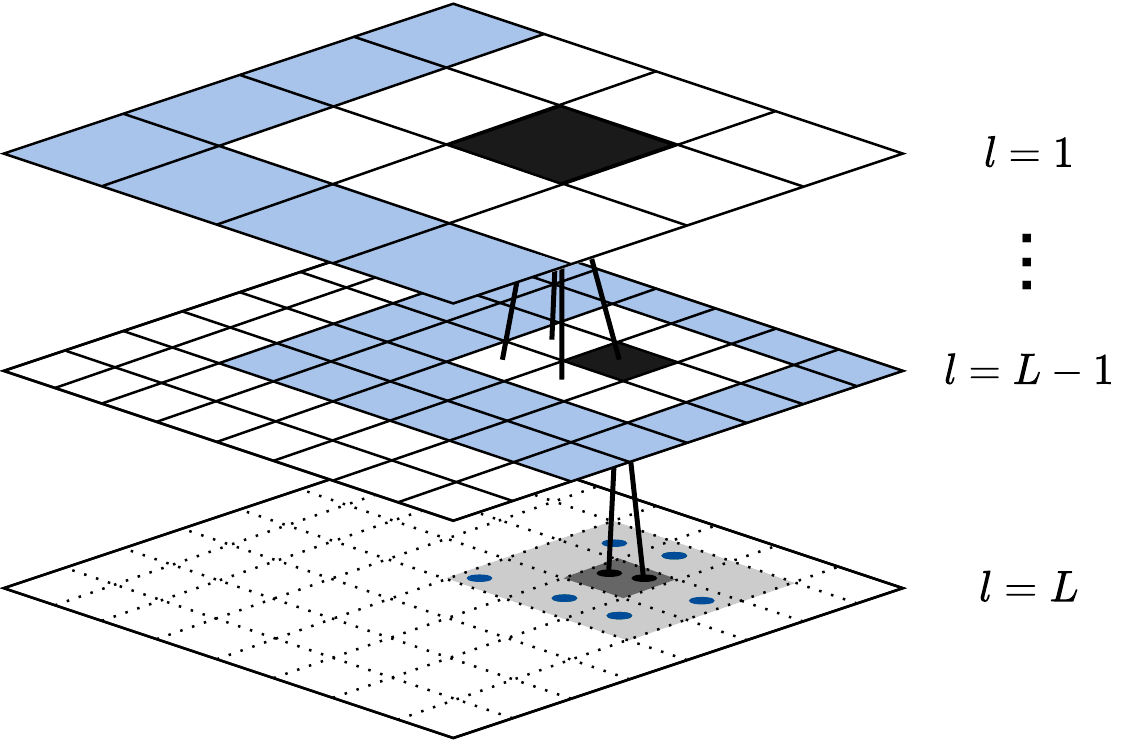}
    \caption{3 level hierarchy ($L=3$) from the point of view of the black particles that belong to the black cells. The first level ($l=1$) has $16$ cells, in each subsequent level, the number of cells quadruples. At the lowest level, we have the particles. Each cell is connected to its four child cells at the lower level. Cells at the second-lowest level ($L-1$) are instead connected to the particles that belong to them. First, during an upward pass cell features are recursively computed from the features of their children. Then, during the downward pass, at each level, the black cell interacts with its near-neighbours (blue). Aggregated interactions are propagated to the cell's children. At the lowest level ($L$), particles interact directly with particles (blue) from the neighbouring cells (grey).}
    \label{fig:hierarchy}
\end{figure}
\subsection{Hierarchical Graph}
We build the hierarchy by recursively subdividing the space into four parts (2D space is assumed, but our approach naturally generalises to 3D spaces as it is based on quadtrees). When a cell is split we add edges between it and its children. We assume that the particles are roughly uniformly distributed and repeat the splitting $\lfloor\log(N)\rceil$ times. In expectation, this results in having one particle per cell at the lowest level. In the general case, when the data is not uniform, each cell can be split until there are at most $k$ particles in it, where $k\geq1$ is some constant. For real-world data, this should also result in linear experimental complexity as it does for the FMM \cite{dehnen2014fast, kabadshow2012periodic}. Empty cells are pruned. We also include another level at the bottom of the hierarchy that holds all of the particles. Each particle has an edge to the cell it belongs to at the level above. Both, the particles and the cells are represented as nodes in the hierarchical graph. Particle nodes have mass, position and velocity as their features. For the cell nodes, the total mass, centre of mass position and velocity are pre-computed from their children. If Coulomb interactions are simulated, the particle charge and total charge features are added to the particle and cell nodes respectively.

Each cell is also connected to its near-neighbours. Near-neighbours are other cells at the same level that are not directly adjacent to the cell, but whose parents were adjacent to the cell's parent (blue, Figure \ref{fig:hierarchy}). This way we recursively push down the closest interactions to lower levels, where the interactions are more granular. Ensuring that interacting cells are never next to each other is also necessary in order to avoid potentially large errors \cite{dehnen2014fast}. At the lowest level particles are directly connected with other particles that belong to the same cell or the neighbouring cells. Importantly, two particles that are next to each other are never separated. This hierarchy also ensures that the receptive field of each particle is close to being symmetric at every level. The top-level with $4$ cells is removed from the hierarchy, as all of the cells are neighbours and do not interact. A visual depiction of the hierarchy can be seen in Figure \ref{fig:hierarchy}. 

It is easy to see that constructing a quadtree with $\log_4 N$ levels takes $O(N\log N)$ time and $O(N)$ space.
Next, we demonstrate that the strategy used to construct the hierarchy yields a number of nodes and edges that scale linearly with $N$ (instead of the typical quadratic complexity discussed earlier).

\begin{theorem}
  There are $O(N)$ nodes in the hierarchy.
\label{theorem:nodes}\end{theorem}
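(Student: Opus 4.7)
The plan is to bound the two types of nodes in the hierarchy separately: the particle nodes at the bottom and the cell nodes across the quadtree levels. The particle nodes contribute exactly $N$ by construction, so the whole argument reduces to showing that the total number of cell nodes in the quadtree is $O(N)$.

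Using the construction just described, the splitting is repeated $\lfloor \log_4 N \rceil$ times (in the stated uniform-density regime), so the quadtree has $L = O(\log N)$ levels and level $l$ contains at most $4^l$ cells, since each split quadruples the cell count and empty-cell pruning only decreases it. Summing the geometric series $\sum_{l=1}^{L} 4^l = \tfrac{4^{L+1}-4}{3}$ and using $4^L = \Theta(N)$ gives $O(N)$ total cell nodes. Adding the $N$ particle nodes leaves the total at $O(N)$, as required; the removal of the top level with $4$ cells (noted in the construction) only changes the leading constant, and the extra factor from $\lfloor \cdot \rceil$ rounding is absorbed into the $O$.

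The step that requires a little care is justifying the $4^l$ bound in the more general non-uniform case the paper alludes to, where splitting continues until each cell contains at most $k$ particles rather than for a fixed depth. The cleanest way to handle this is a leaf-counting argument: each leaf cell contains at least one particle, so there are at most $N$ leaves; and in a compressed quadtree where every internal node has at least two surviving children after empty-cell pruning, the number of internal cells is at most (leaves $-\,1$), again yielding $O(N)$ total cells. I do not expect any genuine obstacle here: the argument is essentially a single geometric series, and the only real decision is whether to present it level-by-level (under uniform density) or via leaf counting (for the general case). Either phrasing is one or two lines, so I would write the level-based version as the main proof and add a brief remark for the non-uniform variant.
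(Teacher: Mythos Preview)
Your main argument is correct and essentially identical to the paper's: both sum a geometric series over the $O(\log_4 N)$ quadtree levels to get $O(N)$ cell nodes, then add the $N$ particle nodes. The paper sums bottom-up as $\sum_{l=0}^{\log_4(N)-2} N\,(1/4)^l < 2N$, you sum top-down as $\sum_{l=1}^{L} 4^l$ with $4^L=\Theta(N)$; these are the same series reindexed.

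Your remark on the non-uniform case goes beyond what the paper actually proves (it treats only the uniform case formally and merely asserts the general case should behave well ``as it does for the FMM''), but it contains a gap worth flagging: empty-cell pruning alone does \emph{not} guarantee that every internal node has at least two surviving children. If more than $k$ particles all lie in the same quadrant of a cell, splitting and pruning leaves that cell with a single non-empty child, and this can recur, producing an arbitrarily long chain of degree-one internal nodes; in that situation the bound ``internal cells $\le$ leaves $-1$'' fails. To make the leaf-counting argument go through you would need an explicit path-compression step (a compressed quadtree), which the paper's construction does not describe.
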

\begin{proof}
Based on our assumption that particles are uniformly distributed, after $\log_4(N)$ splits we will have one particle per cell and thus $N$ cells. In each level going from the bottom-up we will have $4$ times fewer cells. This gives us the following geometric progression for the total number of cells:
\begin{linenomath*}
\begin{equation*}
    N + \frac{N}{4} + ... + 16 = \sum^{\log_4(N)-2}_{l=0} N \left(\frac{1}{4}\right)^l < 2 N   \; .
\end{equation*}
\end{linenomath*}
Considering that in the hierarchy we also include one level with all of the particles, we will have $< 3 N = O(N)$ nodes in the hierarchical graph.
\end{proof}

\begin{theorem}
  There are $O(N)$ edges in the hierarchy and each node has at most a constant number of edges.
\label{theorem:edges}\end{theorem}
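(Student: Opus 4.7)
The plan is to bound the number of edges by decomposing them into four classes that mirror the construction rules of the hierarchy, and to show that every node has only a constant number of incident edges in each class. Combined with the $O(N)$ node bound of Theorem~\ref{theorem:nodes}, this yields both claims of the theorem simultaneously. The four classes are: (A) parent-child edges; (B) the edges from each particle to the leaf cell containing it; (C) near-neighbour edges among same-level cells; and (D) bottom-level particle-particle edges within a cell and to its spatially adjacent cells.

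Classes (A), (B), (D) are straightforward. For (A) every cell has at most four children and at most one parent, so at most five incident edges of this type. For (B) every particle contributes exactly one upward edge, and by construction a leaf cell contains at most $k$ particles, so its degree contribution from (B) is at most $k$. For (D) a bottom-level particle lies in a leaf cell with at most $k-1$ other particles and has at most eight spatially adjacent same-level cells, each holding at most $k$ particles, giving at most $9k-1$ particle-particle edges.

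The only step that requires genuine geometric reasoning is class (C). Here I would invoke the regularity of the quadtree: at any single level the cells occupy pairwise disjoint axis-aligned squares of equal size arranged on a uniform grid, so a cell's parent has at most nine same-level neighbours (itself included), and each of those splits into four cells, giving at most $36$ candidate cells at the target level. Removing the at most nine cells directly adjacent to the target cell leaves at most $27$ near-neighbours, a bound independent of $N$ and of the level. Pruning empty cells and early-terminating non-uniform branches only removes nodes and edges, so the same bound carries over to the general (non-uniform) setting.

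Putting the four bounds together, every node has $O(1)$ incident edges, which already proves the second half of the claim; multiplying by the $O(N)$ node count from Theorem~\ref{theorem:nodes} then yields the first half. I expect the main obstacle to be class~(C), since it is the only place where the 2D geometry of the quadtree must be explicitly exploited to rule out a per-node degree that grows with $N$ or with the level, and it is precisely the step that fails both for the naive fully-connected construction and for several of the prior hierarchical approaches discussed in Related Work.
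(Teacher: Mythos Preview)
Your proposal is correct and follows essentially the same route as the paper: bound the degree of every node by a constant and then multiply by the $O(N)$ node count from Theorem~\ref{theorem:nodes}. Your class-(C) calculation is exactly the paper's ($9$ parent-level cells $\times\,4$ children $=36$, minus the cell itself and its $8$ adjacent cells, giving $27$); the only minor difference is that you give worst-case bounds with general $k$, whereas the paper argues in expectation under the uniform assumption with $k=1$.
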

\begin{proof}
As we assume that particles are uniformly distributed and that in expectation we have $1$ particle per cell in the lowest level, each particle will have $8+1$ edges in expectation (since each particle is only connected to particles from its $8$ neighbouring cells and their parent cell).
 
Each cell in the hierarchy is connected to its parent ($\leq 1$), its children ($\leq 4$) and its near neighbours ($\leq 27$). There are at most $27$ near neighbours because the cell is connected to other cells that belong to its parent or the neighbours of its parent ($1+8$ parent cells, each with $4$ children - $36$ cells), but not itself or the cells that are its immediate neighbours ($36 - 1 - 8 = 27$).
 
So, in expectation, we will always have at most a constant number of edges per node. As we have $O(N)$ nodes as per Theorem \ref{theorem:nodes}, we will also have $O(N)$ edges.
\end{proof}

\subsection{Graph Networks}
Graph network (GN) models \cite{battaglia2018relational} operate on a graph $G=(V,E,\bm{u})$, with global features $\bm{u}$, nodes $V$ and edges $E$. Graph networks enforce a structure similar to traditional simulation methods, where first interactions (edges) between particles (nodes) are computed. Then all incoming interactions (edges) are aggregated per particle (node) and together with particle features are used to compute updated particle features. Global values such as the Hamiltonian (total energy) of the system can also be computed from the interactions and the particle features. This relational bias has been shown to greatly improve the accuracy of simulators compared to using a single multi-layer perceptron (MLP) \cite{battaglia2016interaction, sanchez2018graph}.

In the basic case, a particle system is represented as a fully connected graph, where each node is a particle. The node features we use are mass $m$, position $\bm{q}$, velocity $\dot{\bm{q}}$ and if applicable charge $z$. The edge matrix $E$ holds sender and receiver IDs for each edge. In our case relative node positions are used in the models. Meaning that one of the edge features used during the forward pass is the distance vector between the sender and the receiver. Node positions are masked everywhere else.

As graph network models perform distinct operations on each edge and each vertex their time and space complexity is linear in the number of edges and the number of vertices. If a fully connected graph is used this results in overall $O(N^2)$ time and space complexity. 

\subsubsection{Delta Graph Network (DeltaGN).}
DeltaGN is analogous to previous direct neural simulators \cite{battaglia2016interaction, sanchez2018graph, sanchezgonzalez2019hamiltonian, sanchezgonzalez2020learning}. This model directly predicts the change in particle position $\bm{q}$ and velocity $\dot{\bm{q}}$:
\begin{linenomath*}
\begin{equation*}
(\bm{q},\dot{\bm{q}})_{t+1} = (\bm{q},\dot{\bm{q}})_{t} + (\Delta \bm{q}, \Delta \dot{\bm{q}}) \; ,
\end{equation*}
\end{linenomath*}
where $(\Delta \bm{q}, \Delta \dot{\bm{q}}) = \text{GN}_{V}(V,E,\Delta t)$ are the new vertex features produced by the graph network.

\subsubsection{Hamiltonian ODE Graph Network (HOGN).}
HOGN \cite{sanchezgonzalez2019hamiltonian} uses a graph network to compute the Hamiltonian of the system (single scalar):
\begin{linenomath*}
\begin{equation*}
H_{\text{GN}}(\bm{q}, \bm{p}=m\cdot \dot{\bm{q}}) = \text{GN}_u(V,E) \; .
\end{equation*}
\end{linenomath*}
By deriving this output w.r.t. the inputs of the network (particle position $\bm{q}$ and momentum $\bm{p}$) and using Hamilton's equations we can recover the derivatives of particle position and momentum:
\begin{linenomath*}
\begin{equation*}
f^{HOGN}_{\dot{\bm{q}}, \dot{\bm{p}}} = \left( \frac{\partial H}{\partial p}, - \frac{\partial H}{\partial q} \right) = (\dot{\bm{q}}, \dot{\bm{p}}) \; .
\end{equation*}
\end{linenomath*}
The position updates for a given $\Delta t$ are then produced by a differentiable Runge–Kutta 4 (RK4) integrator that repeatedly queries $f^{HOGN}_{\dot{\bm{q}}, \dot{\bm{p}}}$.

\subsection{Adapting Existing Models}

The idea behind adapting existing graph network models to use the hierarchy is simple: instead of using a dense representation of the particle interactions as input, we use the sparse particle interaction graph from the lowest level of the hierarchy (Figure \ref{fig:hierarchy}). When the graph network model constructs updated edge feature matrix $E'$ for this sparse graph, we append to it edges coming to each particle from its parent cell. This augmented edge feature matrix is then used to update vertex features and to calculate any global features in the same manner as done in the original models. 

The construction of this special edge -- representing all distant interactions for a particle -- requires propagating the information through the hierarchy. To this end, the updated lowest level cell feature vectors are built from their children as:
\begin{linenomath*}
\begin{equation*}
\bm{v}_c' = \bm{v}_c \oplus \sum_{p \in \text{children}(c)} \phi^{p\rightarrow c}(m_c, \dot{\bm{q}}_c, m_p, \dot{\bm{q}}_p, (\bm{q}_c - \bm{q}_p)) \; ,
\end{equation*}
\end{linenomath*}
where $v_c$ is the original feature vector of a cell (total  mass and centre of mass velocity), $p$ is the child particle, $\phi^{p\rightarrow c}$ is an MLP and $\oplus$ is a concatenation operator.
The features of the parent cell $c_p$ in the upper levels are built in a similar way from their children cells $c_c$:
\begin{linenomath*}
\begin{equation*}
\bm{v}_{c_p}' = \bm{v}_{c_p} \oplus \sum_{c_c \in \text{children}({c_p})} \phi^{{c_c}\rightarrow {c_p}}(m_{c_p}, \dot{\bm{q}}_{c_p}, \bm{v}_{c_c}', (\bm{q}_{c_p} - \bm{q}_{c_c})).
\end{equation*}
\end{linenomath*}
The $\phi^{{c_c}\rightarrow {c_p}}$ MLP uses a different set of parameters from $\phi^{p\rightarrow c}$, but $\phi^{{c_c}\rightarrow {c_p}}$ parameters are shared between all of the levels.

During the top-down pass at each level cell-cell interactions are computed:
\begin{linenomath*}
\begin{equation*}
\bm{e}_{c_j,c_i}' = \phi^{c\rightarrow c}(\bm{v}_{c_i}', \bm{v}_{c_j}', (\bm{q}_{c_i} - \bm{q}_{c_j})) \; .
\end{equation*}
\end{linenomath*}
For each cell, incoming interactions are aggregated (summed) together with interactions propagated from the parent:
\begin{linenomath*}
\begin{equation*}
\bm{e}_{\text{parent}_c}' = \phi^{c_p\rightarrow c_c}(\bm{v}_{c}', \bm{v}_{c_p}', (\bm{q}_{c} - \bm{q}_{c_p})) \; .
\end{equation*}
\end{linenomath*}
These aggregated interactions $\bm{e}_c' = \bm{e}_{\text{parent}_c}' + \sum_{c_j} \bm{e}_{c_j,c}'$ are used to update cell features:
\begin{linenomath*}
\begin{equation*}
\bm{v}_c'' = \bm{v}_{c} \oplus  \phi^{c}(\bm{v}_c', \bm{e}_c') \; .
\end{equation*}
\end{linenomath*}
Finally, the cell-particle edges are computed:
\begin{linenomath*}
\begin{equation*}
\bm{e}_{\text{parent}_p}' = \phi^{c\rightarrow p}(m_p, \dot{\bm{q}}_p, \bm{v}_c'', (\bm{q}_p - \bm{q}_c)) \; .
\end{equation*}
\end{linenomath*}
The $\phi^{c_p\rightarrow c_c}$ MLP uses different parameters from $\phi^{c\rightarrow p}$. Parameters of  $\phi^{c_p\rightarrow c_c}$ and $\phi^{c\rightarrow c}$ are shared between the hierarchy levels.

If any global parameters $\bm{u}$ are used by the model they are also used as part of the input to all of the hierarchy MLPs.
% ($\phi^{p\rightarrow c}$, $\phi^{{c_c}\rightarrow {c_p}}$, $\phi^{c\rightarrow c}$, $\phi^{{c_p}\rightarrow {c_c}}$, $\phi^{{c}\rightarrow {p}}$)

It is easy to see that all of these operations have time and space complexity that is linear in the number of nodes and edges.

Further information about the implementation of the models can be found in Appendix~\ref{appendix:models}.

\section{Experiments}
\subsection{Data}
We built our own N-body simulator. It uses a symplectic Leapfrog integrator that preserves the total energy of the system, Plummer force softening that helps avoid the singularity which arises when the distance between two particles goes to zero, individual and dynamic time steps for particles \cite{dehnen2011n}. The time step is set for each particle as a fraction of the base time step based on its acceleration. The particle state (mass, position, velocity) is saved at every base time step. The system we simulate uses periodic boundary conditions which means that when a particle leaves the unit cell its exact copy enters it on the opposite side.
Further information on the simulator can be found in Appendix~\ref{appendix:data}.

We simulate 1000 training, 200 validation and  200 test trajectories. Every trajectory is 200 base time steps long.
Particle positions are initialised uniformly at random inside the unit cell, mass is set to 1 and each velocity vector component is initialised uniformly at random over $(-1,1)$. Additionally, 
if we simulate Coulomb interactions each particle has its charge initialised uniformly at random over $(0.5,1.5)$ and assigned a random sign. The base time step is set to $\Delta t = 0.01$. Gravitational and Coulomb constants are respectively set to $G = 2$ and $k = 2$. We assume that in our simulated galaxy all of the units are dimensionless.

\subsection{Results}
% \begin{figure*}[t] 
%   \begin{subfigure}[t]{0.5\linewidth}
%     \centering
%     \includegraphics[width=1.0\linewidth]{}
%     \vspace{-3.5ex}
%     \caption{} 
%     \label{batch_size_100:a} 
%     \vspace{0.75ex}
%   \end{subfigure}%% 
%   \begin{subfigure}[t]{0.5\linewidth}
%     \centering
%     \includegraphics[width=1.0\linewidth]{}
%     \vspace{-3.5ex}
%     \caption{} 
%     \label{batch_size_100:b} 
%     \vspace{0.75ex}
%   \end{subfigure} 
%   \begin{subfigure}[t]{0.5\linewidth}
%     \centering
%     \includegraphics[width=1.0\linewidth]{}
%     \vspace{-3.5ex}
%     \caption{} 
%     \label{batch_size_100:c} 
%   \end{subfigure}%%
%   \begin{subfigure}[t]{0.5\linewidth}
%     \centering
%     \includegraphics[width=1.0\linewidth]{}
%     \vspace{-3.5ex}
%     \caption{} 
%     \label{batch_size_100:d} 
%   \end{subfigure} 
%   \caption{Models trained and evaluated on increasing particle counts. (a) 20 step rollout RMSE, (b) 200 step rollout RMSE, (c) energy error after 20 steps, (d) energy error after 200 steps. Batch sizes smaller than $100$ were used if not enough memory was available (shown in the graph).}
%   \label{fig:batch_size_100}
% \end{figure*}
\begin{figure*}[t]
  \centering
  \includegraphics[width=1.0\linewidth]{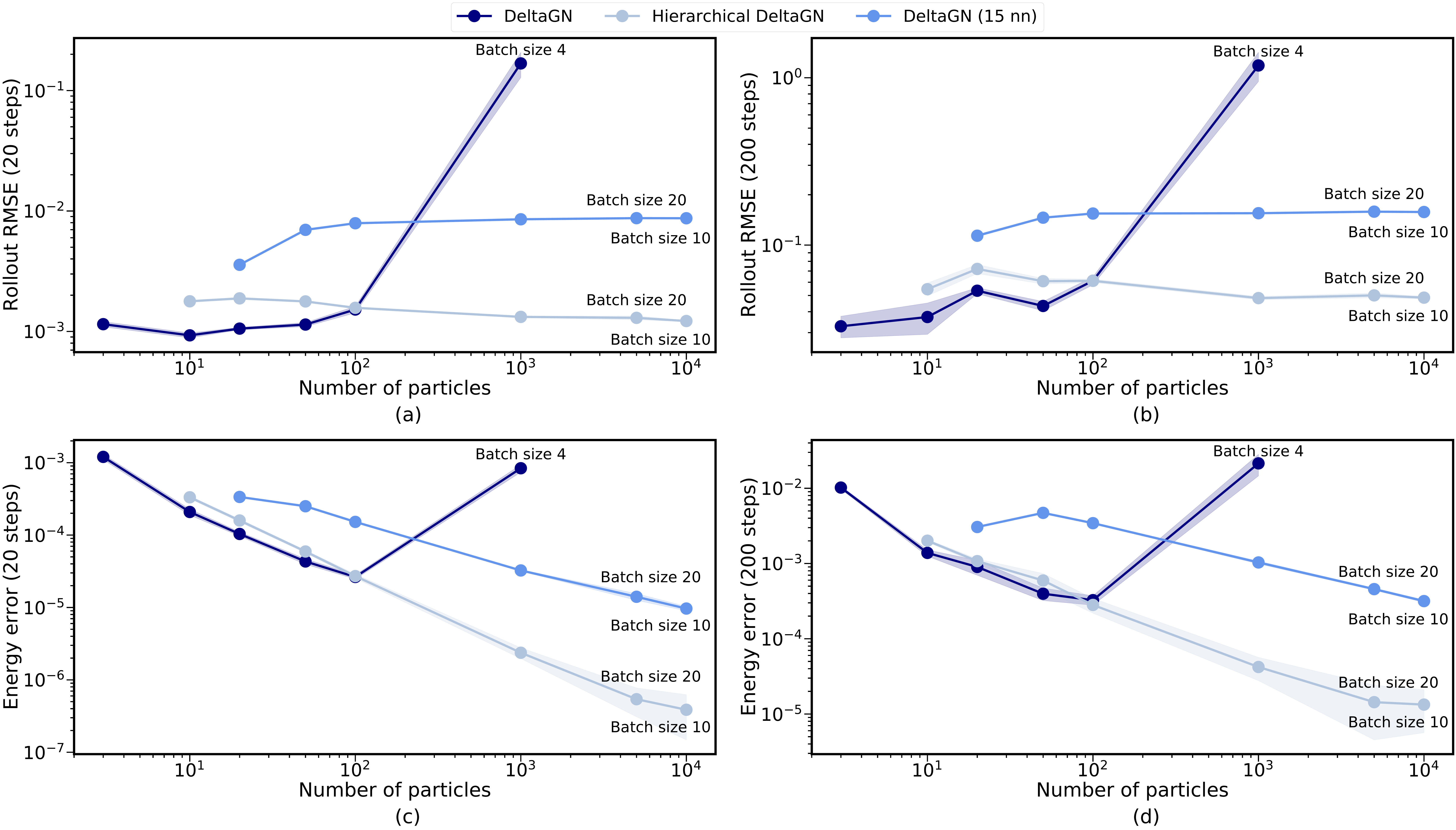}
  \caption{Models trained and evaluated on increasing particle counts. (a) 20 step rollout RMSE, (b) 200 step rollout RMSE, (c) energy error after 20 steps, (d) energy error after 200 steps. Batch sizes smaller than $100$ were used if not enough memory was available (shown in the graph).}
  \label{fig:batch_size_100}
\end{figure*}
We compare the model that uses our hierarchy (Hierarchical DeltaGN) against two baselines: model that uses a fully connected graph (DeltaGN) and a more computationally efficient model that uses $15$ nearest neighbour graph (DeltaGN (15 nn)). We also perform an experiment to test if our hierarchical approach is compatible with the HOGN model.
Models are trained for 500 thousand steps using a batch size of $100$ unless stated otherwise. We exponentially decay the learning rate every 200 thousand steps by $0.1$. The initial learning rate for all of the models was set to $0.0003$. The models are optimised using ADAM \cite{kingma2014adam}, and the mean square error (MSE) which is computed between the predicted and true phase space coordinates after one time step.

All datasets have the same particle density, meaning when increasing the particle count we increase the size of the unit cell accordingly.

When the rollout error is reported, it means that $200$ test trajectories $T$ are unrolled for a specified number of time steps by supplying the model with initial particle positions and then feeding the model its own outputs for the subsequent time steps. The input graph is rebuilt each time using the model's outputs. The error is computed as RMSE between phase-space coordinates of the predictions and the ground truth over the unrolled trajectory.

Gravitational and Coulomb systems are conservative, which means that their total energy should stay constant throughout their evolution. From this arises a common accuracy measure used in N-body simulations - relative energy error between the first and the last system states. We calculate mean relative energy error over all test trajectories:
\begin{linenomath*}
\begin{equation*}
    \text{Energy error} = \frac{1}{|T|} \sum_{i=1}^{|T|} \frac{H_{i, 0} - \hat{H}_{i, \tau}}{H_{i, 0}}   \; ,
\end{equation*}
\end{linenomath*}
where $\tau$ is the number of time steps the trajectory is unrolled for and $H_{i, t}$ is the Hamiltonian (total energy) of the system at time step $t$ of the trajectory $i$.

All errors are averaged over 5 independent runs of the model. We report the mean error and the standard deviation.

The experiments were performed on a machine with an Intel Xeon E5-2690 v3 CPU (2.60GHz, 12 cores, 24 threads), 64GB RAM and NVIDIA Tesla P100 GPU (16GB RAM).

\subsubsection{Scaling to Larger Particle Counts.}
From Figure \ref{fig:batch_size_100} we can see that hierarchical DeltaGN does not perform as well on small particle counts ($< 100$) where the hierarchy is small. However, it quickly catches up when $100$ particles are simulated and is able to simulate large particle counts with even better accuracy. Simulations with more particles are not feasible with DeltaGN that uses the fully-connected graph, as it soon runs out of memory and on the $1000$ particle dataset already suffers from terrible performance. This is probably due to the very large edge count per node. The DeltaGN (15 nn) model that uses only local connectivity is able to scale to large particle counts but suffers from poor accuracy due to the ignored long-range interactions. When the same small batch size is used for all of the models the situation stays similar (Figure \ref{fig:batch_size_1}).

The energy error usually decreases with the particle count, because we keep the particle density constant. Thus, when extra particles are added, many long-range interactions are created, but the number of close interactions each particle has stays roughly the same. Close interactions are more error-prone due to much stronger forces at close distances.

\subsubsection{Small Batch Size.}

% \begin{figure*}[t] 
%   \begin{subfigure}[t]{0.5\linewidth}
%     \centering
%     \includegraphics[width=1.0\linewidth]{}
%     \caption{} 
%     \label{batch_size_1:a} 
%   \end{subfigure}%%
%   \begin{subfigure}[t]{0.5\linewidth}
%     \centering
%     \includegraphics[width=1.0\linewidth]{}
%     \caption{} 
%     \label{batch_size_1:b} 
%   \end{subfigure} 
%   \caption{Models trained using a batch size of $1$. Their 20 step rollout RMSE (a) and energy error (b).}
%   \label{fig:batch_size_1}
% \end{figure*}
\begin{figure*}[t] 
  \centering
  \includegraphics[width=1.0\linewidth]{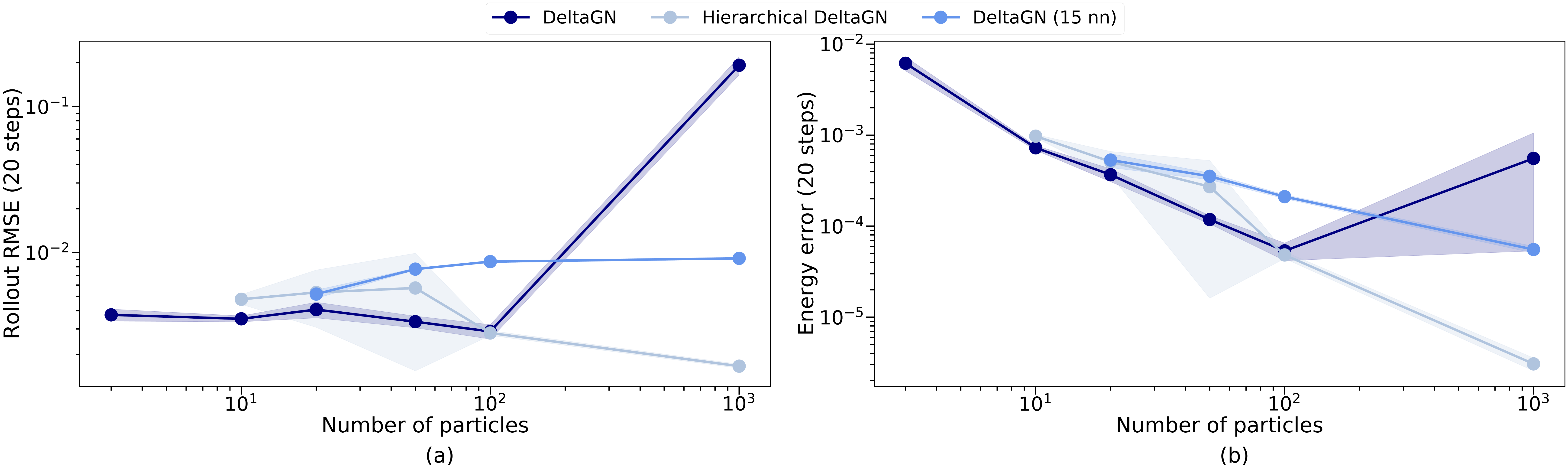}
  \caption{Models trained using a batch size of $1$. Their 20 step rollout RMSE (a) and energy error (b).}
  \label{fig:batch_size_1}
\end{figure*}

\begin{table*}[t]
\centering
\resizebox{\textwidth}{!}{\begin{tabular}{||c c c c c c||} 
 \hline
 Model & \begin{tabular}{c}
   Number of \\
   particles
 \end{tabular} &
  \begin{tabular}{c}
   Rollout RMSE\\
   (20 steps)
 \end{tabular}&
  \begin{tabular}{c}
   Energy error \\
   (20 steps)
 \end{tabular}
 & \begin{tabular}{c}
   Rollout RMSE\\
   (200 steps)
 \end{tabular}& \begin{tabular}{c}
   Energy error \\
   (200 steps)
 \end{tabular}  \\ [0.5ex] 
 \hline\hline
 \begin{tabular}{c}
   DeltaGN 
 \end{tabular} & 100 & \boldmath{$0.0045 \pm 2.7 \cdot 10^{-4}$} & \boldmath{$0.0086 \pm 0.0041$} & \boldmath{$0.0684 \pm 0.0031$} & \boldmath{$0.0584 \pm 0.0202$} \\ \hline
 \begin{tabular}{c}
   Hierarchical \\
   DeltaGN 
 \end{tabular} & 100 & $0.0052 \pm 1.61 \cdot 10^{-4}$ & $0.0152 \pm 0.0036$ & $0.1013 \pm 0.0019$ & $0.0931 \pm 0.0138$  \\ \hline
 \hline
 \begin{tabular}{c}
   DeltaGN 
 \end{tabular} & 1000 & $0.1744 \pm 0.0234$ & $ 0.0592 \pm 0.0507$ & NA & NA \\ \hline
 \begin{tabular}{c}
   Hierarchical \\
   DeltaGN 
 \end{tabular} & 1000 & \boldmath{$0.0058 \pm 1.45 \cdot 10^{-4}$} & \boldmath{$ 0.0014 \pm 1.90 \cdot 10^{-4}$} & \boldmath{$0.1252 \pm 0.0020$} & \boldmath{$0.0418 \pm 0.0031$} \\ [1ex] 
 \hline
\end{tabular}}
\caption{Test accuracy achieved by DeltaGN models on the Coulomb force datasets. DeltaGN used maximum possible batch size of 4 on the 1000 particle dataset, while in all other cases batch size of 100 was used. NA means that during unroll for some of the runs particle velocities grew so large that float32 overflowed and model returned NaN values.}
\label{table:coulomb}
\end{table*}

One easy way to reduce memory usage and to decrease the computation time is to reduce the batch size. However, this potentially comes with a big accuracy penalty. As can be seen from the rollout RMSE (Figure \ref{fig:batch_size_1}a) and the energy error (Figure \ref{fig:batch_size_1}b) plots training models with a batch size of $1$ results in a $\sim 10\%$ decrease in accuracy for DeltaGN (15 nn) and a $\sim 5$ time decrease in accuracy for the other models.

We note that using the same batch size is not entirely fair to models with $O(N)$ complexity. Indeed, they have much fewer edge samples in the batch, compared to the model that uses a fully-connected graph. This lack of edge samples is most likely the reason why some runs of the hierarchical DeltaGN on $20$ and $50$ particle datasets got stuck in local minima and caused high variance.

\subsubsection{Coulomb Interactions.}

We expect that Coulomb interactions are harder to learn since they can be either attractive or repulsive. We also made this dataset more complex by simulating charges of different magnitudes.
In Table \ref{table:coulomb} we can see that on the $100$ particle dataset the hierarchical DeltaGN again performs similarly to the model that uses a fully-connected graph. However, when the particle count increases to $1000$ DeltaGN fails, while the hierarchical DeltaGN retains similar performance. 

The particle charge was supplied to the models alongside the features supplied in the case of gravitational force.

\subsubsection{Hierarchical HOGN.}

Five independent runs of hierarchical HOGN were trained on the $100$ and $1000$ particle datasets. As can be seen from Table \ref{table:HOGN} these runs had very high variance. However, one of the hierarchical HOGN runs resulted in the most accurate model we have trained on the $1000$ particle dataset. Although, the Hamiltonian and ODE biases did not bring as large of an improvement, as seen when a fully connected graph is used \cite{sanchezgonzalez2019hamiltonian}.

We do believe that the training process of the hierarchical HOGN can be made more stable, but we leave this direction for future work.

\begin{table*}[t]
\centering
\resizebox{\textwidth}{!}{\begin{tabular}{||c c c c c c c||} 
 \hline
 Model & \begin{tabular}{c}
   Number of \\
   particles
 \end{tabular} & \begin{tabular}{c}
   Batch \\
   size
 \end{tabular} &
  \begin{tabular}{c}
   Rollout RMSE\\
   (20 steps)
 \end{tabular}&
  \begin{tabular}{c}
   Energy error \\
   (20 steps)
 \end{tabular}
 & \begin{tabular}{c}
   Rollout RMSE\\
   (200 steps)
 \end{tabular}& \begin{tabular}{c}
   Energy error \\
   (200 steps)
 \end{tabular}  \\ [0.5ex] 
 \hline\hline
 \begin{tabular}{c}
   Hierarchical \\
   DeltaGN  
 \end{tabular} & 100 & 100 & $0.0016 \pm 1.74 \cdot 10^{-5}$ & $(2.71 \pm 0.18)\cdot 10^{-5}$ & $0.0612 \pm 0.0015$ & $(2.82 \pm 0.63)\cdot 10^{-4}$ \\ \hline
 \begin{tabular}{c}
   HOGN 
 \end{tabular} & 100 & 50 & \boldmath{$(2.09 \pm 0.26) \cdot 10^{-4}$} & \boldmath{$(6.30 \pm 2.14) \cdot 10^{-6}$} & \boldmath{$0.0139 \pm 0.0023$} & \boldmath{$(4.41 \pm 1.03) \cdot 10^{-5}$} \\ \hline
 \begin{tabular}{c}
   Hierarchical \\
   HOGN 
 \end{tabular} & 100 & 100 & $0.0046 \pm 0.0042$ & $(8.57 \pm 8.48) \cdot 10^{-5}$ & $0.1145 \pm 0.0814$ & $0.0016 \pm 0.0015$  \\ \hline
 \begin{tabular}{c}
   Hierarchical \\
   HOGN (best)
 \end{tabular} & 100 & 100 & $0.0017$ & $3.06 \cdot 10^{-5}$ & $0.0579$ & $3.55 \cdot 10^{-4}$  \\ \hline
 \hline
 \begin{tabular}{c}
   Hierarchical \\
   DeltaGN 
 \end{tabular} & 1000 & 100 & $0.0013 \pm 1.99 \cdot 10^{-5}$ & $(2.37 \pm  0.40) \cdot 10^{-6}$ & $0.0482 \pm 0.0011$ & $(4.22 \pm 1.44) \cdot 10^{-5}$ \\
 \hline
 \begin{tabular}{c}
   Hierarchical \\
   HOGN 
 \end{tabular} & 1000 & 20 & $0.0023 \pm 0.0011$ & $(4.87 \pm 1.99) \cdot 10^{-6}$ & $0.0691 \pm 0.0299$ & $(4.27 \pm 1.42) \cdot 10^{-5}$ \\
 \hline
 \begin{tabular}{c}
   Hierarchical \\
   HOGN (best)
 \end{tabular} & 1000 & 20 &\boldmath{$0.0012$} & \boldmath{$2.03 \cdot 10^{-6}$} & \boldmath{$0.0377$} & \boldmath{$2.87 \cdot 10^{-5}$} \\ [1ex] 
 \hline
\end{tabular}}
\caption{Hierarchical HOGN. Due to the very high variance of the hierarchical HOGN runs we also provide the results for the best run. All models were trained with a batch size of $100$ when possible and largest batch size otherwise. HOGN ran out of memory on the $1000$ particle dataset even with a batch size of $1$.}
\label{table:HOGN}
\end{table*}

\subsubsection{Empirical Time Complexity.}
We timed 10 000 runs of the model forward pass, using different graphs as inputs (Figure \ref{fig:time_complexity}a). We observe, that while GPU parallelism initially counteracts the increased complexity, as expected using a fully-connected graph results in asymptotically quadratic time complexity. While using hierarchical and 15 nearest neighbour graphs results in asymptotically linear scaling. The pre-computation time of the nearest neighbour graphs is asymptotically quadratic, while the pre-computation time of our hierarchical graphs scales as $O(N\log N)$ (Figure \ref{fig:time_complexity}b). In both cases, pre-computation on the CPU was faster than on the GPU with our implementation. Time scaling is not monotonic for the hierarchical graph pre-computation and the hierarchical DeltaGN, because the depth of the hierarchy is set as $\lfloor\log_4(N)\rceil$.

% \begin{figure*}[t] 
%   \begin{subfigure}[c]{0.5\linewidth}
%     \centering
%     \includegraphics[width=1.0\linewidth]{}
%     \vspace{-3.5ex}
%     \caption{} 
%     \label{time_complexity:a} 
%   \end{subfigure}%%
%   \begin{subfigure}[c]{0.5\linewidth}
%     \centering
%     \includegraphics[width=1.0\linewidth]{}
%     \vspace{-3.5ex}
%     \caption{} 
%     \label{time_complexity:b} 
%   \end{subfigure} 
%   \caption{(a) Time scaling of the model forward pass using a batch size of $100$. (b) Time scaling of graph pre-computation on a CPU. Averaged over 10 000 (a) and 1000 (b) runs respectively.}
%   \label{fig:time_complexity}
% \end{figure*}
\begin{figure*}[t] 
  \centering
  \includegraphics[width=1.0\linewidth]{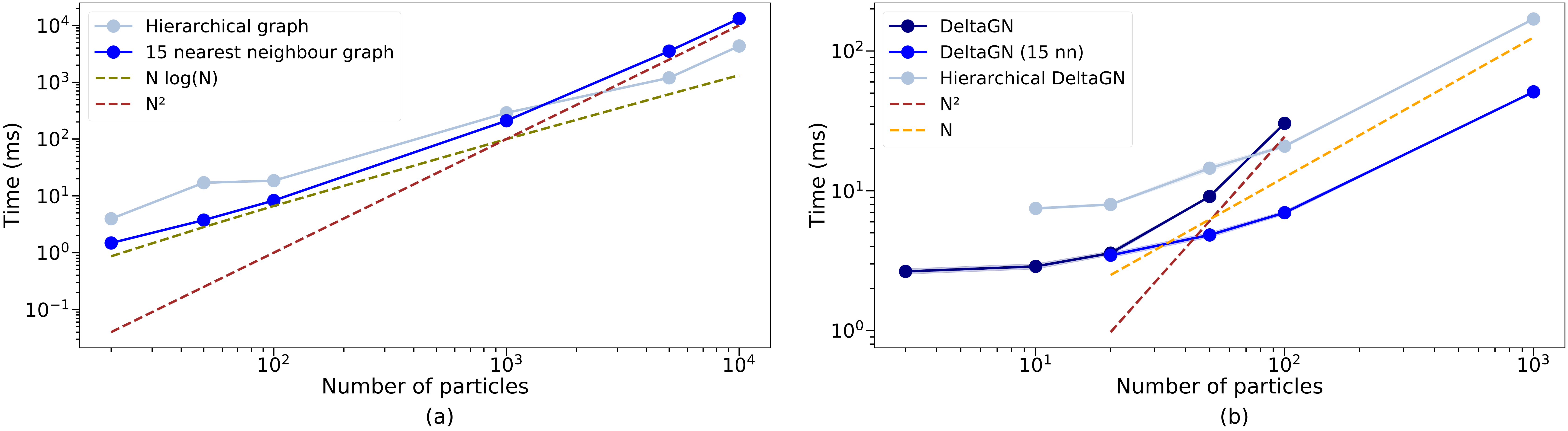}
  \caption{(a) Time scaling of the model forward pass using a batch size of $100$. (b) Time scaling of graph pre-computation on a CPU. Averaged over 10 000 (a) and 1000 (b) runs respectively.}
  \label{fig:time_complexity}
\end{figure*}

\subsubsection{Generalisation to Unseen Particle Counts.}
Models that use a fully-connected graph suffer from disastrous performance when they are evaluated on particle counts they were not trained on (Figure \ref{fig:transferability_20_steps}). The most likely cause is the sharp change in the number of incoming edges for each node. This is corroborated by the fact that DeltaGN (15 nn), which uses a graph with a constant number of neighbours, achieves almost the same accuracy on unseen particle counts as the models trained on that particle count. In our hierarchy, we have a roughly constant number of edges per node (Theorem \ref{theorem:edges}). As a result, the hierarchical DeltaGN generalises to unseen particle counts much better than models that use a fully-connected graph.  When more than $100$ particles are used, it outperforms DeltaGN (15 nn). Our hierarchical approach can be combined with existing techniques, such as randomly dropping edges during training \cite{rong2019dropedge} or constraining message size \cite{cranmer2019learning}, to further improve the generalisation.

\begin{figure}[ht!]
    \centering
    \includegraphics[width=1.0\columnwidth]{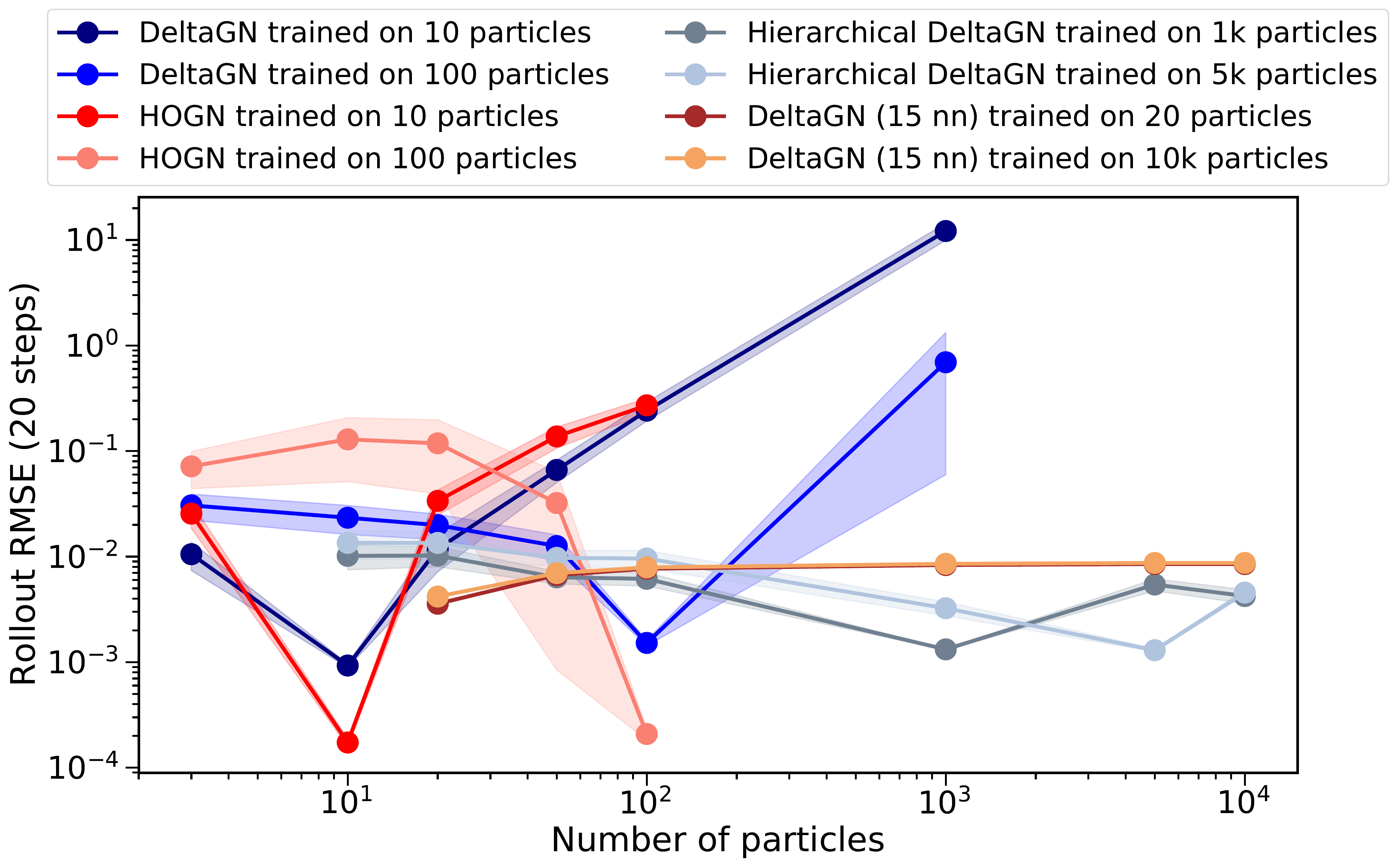}
    \caption{Models trained on one particle count evaluated on different particle counts. 20 step rollout RMSE. The trained models from Figure \ref{fig:batch_size_100} were used.}
    \label{fig:transferability_20_steps}
\end{figure}

\section{Conclusion}
We presented a novel hierarchical graph construction technique that can be used to adapt existing graph network models. We show that this hierarchical graph improves the model time and space complexity from $O(N^2)$ to $O(N)$ when applied to particle simulations that require $O(N^2)$ interactions. A pre-computation step also requires $O(N \log N)$ time and $O(N)$ space. This theoretical improvement is also observed in practice and allows us to train models on much larger datasets than previously possible. We were able to achieve good accuracy on a dataset with 10 000 particles, while prior models that use a fully-connected graph failed on 1000 particles. Our approach also displayed much better accuracy than a standard baseline based on the nearest neighbour graph. Furthermore, we observed improved generalisation to different particle counts when the hierarchy is used. We hypothesise this is due to the regularisation enforced by the hierarchy (the number of edges per particle tends to be more constant), although this requires more investigation.

While we mostly focused on the faster model that directly predicts the system's state change, we have also shown that this hierarchical approach is compatible with Hamiltonian and ODE biases. However, the inclusion of these biases does not seem to have the desired impact on the model's accuracy and further work is required to reduce the variance of the resulting model.

We tested our approach in a noiseless setting. However, noisy observations are a tangent problem that can be addressed by unrolling the trajectories for multiple steps during training and by using a more robust integration scheme \cite{desai2020vign}.

Finally, this work takes an important step towards learning to simulate larger and more realistic dynamical systems in a way that is compatible with many existing state-of-the-art approaches. Developing neural simulators is important because it can even lead to the discovery of novel physics formulas \cite{cranmer2020discovering}. In general, this method could be used to extend graph networks that are applied to other problems where global information is needed.

\section*{Acknowledgements}
The authors would like to thank Roberta Huang, Thomas Hofmann, Janis Fluri, Tomasz Kacprzak and Alexandre Refregier for their involvement in the early phase of this project.

\bibliography{refs.bib}

%%%%%%%%%%%%%%%%%%%%%%%%%%%%%%%%%%%%%%%%%%%%%% Appendix

% \clearpage
% \newpage
% \setcounter{secnumdepth}{1}
\appendix

\section{Data Generation}
\label{appendix:data}
As mentioned in the Data section, we built our own N-body simulator. It is based on the best practices of N-body simulations as outlined by \cite{dehnen2011n}. We refer the reader to that work for more information on the concepts mentioned below.

We model a 2D system ($\bm{q}_i = [x_i,y_i]$, $\dot{\bm{q}}_i = [\dot{x}_{i}, \dot{y}_{i}]$) and store each particle state as a 5-element vector $[m, x, y, \dot{x}, \dot{y}]$. The initialisation is described in the main body of the paper. We use periodic boundary conditions, which means that we simulate a unit cell and then tile an infinite system with copies of that cell (Figure \ref{fig:pbc}). The size of the unit cell is chosen such that we would have roughly one particle per twelve square units.

\begin{figure}[ht]
    \centering
    \includegraphics[width=1.0\columnwidth]{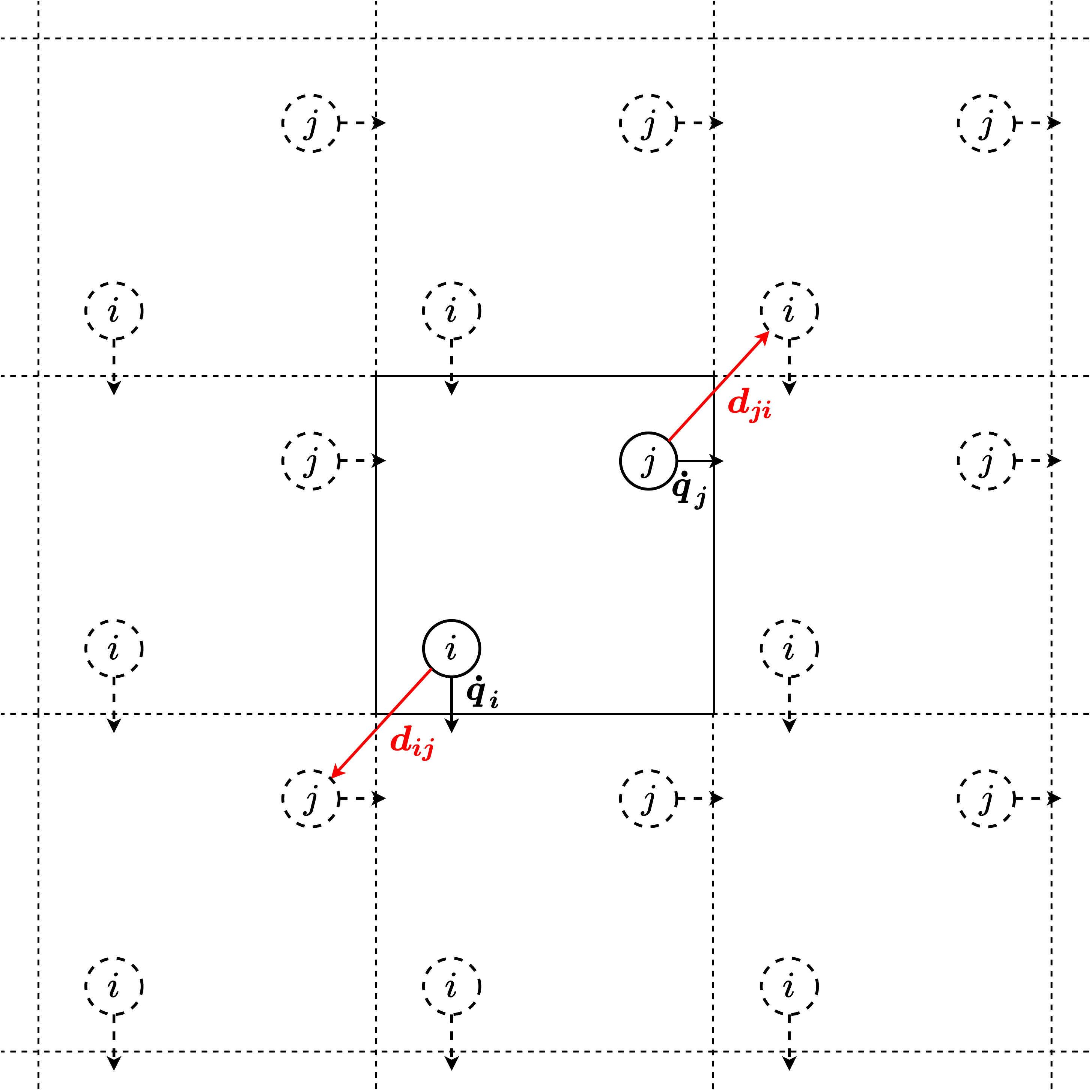}
    \caption{Periodic boundary conditions. Unit cell (solid) is tiled to create an infinite system (dashed). Distance (red) is computed to the closest copy of the particle.}
    \label{fig:pbc}
\end{figure}

The equations of motion are integrated using a symplectic Leapfrog integrator which ensures energy conservation throughout the simulation.
We use the \textit{kick drift kick} formulation of the Leapfrog integrator:
\begin{linenomath*}
\begin{equation*}
    \dot{\bm{q}}' = \dot{\bm{q}}_0 + \frac{1}{2}\bm{a}_0\Delta t \; ,
\end{equation*}
\end{linenomath*}
\begin{linenomath*}
\begin{equation*}
    \bm{q}_1 = \bm{q}_0 + \dot{\bm{q}}'\Delta t  \; ,
\end{equation*}
\end{linenomath*}
\begin{linenomath*}
\begin{equation*}
    \dot{\bm{q}}_1 = \dot{\bm{q}}' + \frac{1}{2}\bm{a}_1\Delta t  \; ,
\end{equation*}
\end{linenomath*}
where $\bm{a}$ is the acceleration vector, $\bm{q}_0$ and $\dot{\bm{q}_0}$ are the initial particle position and velocity vectors, $\Delta t$ is the integration time step.
To save particle position and velocity at the same time we use the time synchronised version of this integrator:
\begin{linenomath*}
\begin{equation*}
    \bm{q}_1 = \bm{q}_0 + \dot{\bm{q}}_0\Delta t + \frac{1}{2}\bm{a}_0\Delta t  \; ,
\end{equation*}
\end{linenomath*}
\begin{linenomath*}
\begin{equation*}
    \dot{\bm{q}}_1 = \dot{\bm{q}}_0 + \frac{1}{2}(\bm{a}_0 + \bm{a}_1)\Delta t  \; .
\end{equation*}
\end{linenomath*}

The particle acceleration is computed precisely by using all $N-1$ interactions of the particle.

To optimise the simulation we use hierarchical time-steps:
\begin{linenomath*}
\begin{equation*}
    \Delta t_n = \frac{\Delta t_0}{2^n}  \; ,
\end{equation*}
\end{linenomath*}
where $n$ is the level in the time-step hierarchy and $\Delta t_0$ is the base time step.

To assign a time-step to a particle $i$ we use the following criterion:
\begin{linenomath*}
\begin{equation}
    \Delta t_i = \eta \sqrt{\frac{\epsilon}{|\bm{a}_i|}}  \; ,
\end{equation}
\end{linenomath*}
where $\epsilon$ is the force softening length that we will discuss in the next paragraph, $|\bm{a}_i|$ is the magnitude of the particle acceleration and $\eta$ is free parameter that we set to $0.001$. The particle is assigned to the largest time-step level $\Delta t_n$ that is smaller than $\Delta t_i$. We use a criterion based solely on acceleration because higher-order estimates are not available in the Leapfrog integration scheme.
The particle state is saved only every base time-step $\Delta t_0$.

When particles are very close, their mutual gravitational attraction goes to infinity. Which means that when particles are very close we would need infinitesimal time steps to accurately integrate their trajectories. To make the simulation smoother and to avoid the singularity we use Plummer softening. In this case each particle is replaced by a Plummer sphere \cite{plummer1911problem} of scale (softening) radius $\epsilon$ and mass $m_i$. The density of the sphere at the distance $r$ from its centre is
\begin{linenomath*}
\begin{equation*}
    \rho_p(r) = \frac{3m_i}{4 \pi \epsilon^3} \left( 1 + \frac{r^2}{\epsilon^2} \right) \; .
\end{equation*}
\end{linenomath*}

This results in a softened acceleration 
\begin{linenomath*}
\begin{equation*}
    \bm{a}_i = - G \sum_{j \neq i} \frac{m_j}{(\|\bm{q}_i - \bm{q}_j\|_2^2 + \epsilon^2)^{\frac{3}{2}}} \cdot (\bm{q}_i - \bm{q}_j)  \; ,
\end{equation*}
\end{linenomath*}
that for a pair of particles goes to zero when distance is smaller than $\epsilon$. We found $\epsilon = 0.2$ to be sufficiently large to avoid the need of very small time steps, while being sufficiently small to not influence most particle interactions.

If Coulomb interactions are simulated, the acceleration is instead computed as 
\begin{linenomath*}
\begin{equation*}
    \bm{a}_i = k \frac{1}{m_i} \sum_{j \neq i} \frac{c_i \cdot c_j}{(\|\bm{q}_i - \bm{q}_j\|_2^2 + \epsilon^2)^{\frac{3}{2}}} \cdot (\bm{q}_i - \bm{q}_j)  \; ,
\end{equation*}
\end{linenomath*}
where $c$ is the particle charge, and $k$ is the Coulomb constant. The particle charge is also appended to the particle state vector.
 
\section{Model Implementation}
\label{appendix:models}
Graph network block \cite{battaglia2018relational} architectures used in DeltaGN and HOGN models \cite{sanchezgonzalez2019hamiltonian} can be seen in Figure \ref{fig:DeltaGN_block} and Figure \ref{fig:HOGN_block} respectively. It's assumed that we are working with a graph $G=(V,E,\bm{u})$, with global features $\bm{u}$, nodes $V$ and edges $E$. $N^v$ is the number of nodes and $N^e$ is the number of edges. Each edge has its sender $s_k$ and receiver $r_k$ IDs. All aggregation functions $\rho$ are summations. $\rho^{e \rightarrow v}$ aggregates all of the incoming edges for each node, $\rho^{e \rightarrow u}$ and $\rho^{v \rightarrow u}$ aggregate all of the edges and nodes respectively. The edge block's MLP $\phi^e$ has 2 hidden layers, each with 150 hidden units. Each layer is followed by an activation function (including the last one). The node block's MLP $\phi^v$ has 3 hidden layers, each with 100 hidden units. Each layer is again followed by an activation function. The global block's MLP $\phi^u$ which is only used in HOGN has 2 hidden layers, each with 100 hidden units. Each layer is again followed by an activation function.

% We performed a grid search for DeltaGN and HOGN models on hidden unit count (from 10 to 500) and number of hidden layers (1,2,3) on 3, 10, 20, 50, 100 particle datasets using fully connected graphs. HOGN has high variance and poor performance if < 100 hidden units are used and using > 150 hidden units did not improve the performance further. DeltaGN gains when more than 150 hidden units are used were also marginal. We discovered that using 3 layers with 100 hidden units instead of 150 hidden units and 2 layers for the node block MLP slightly improved the performance without a noticeable speed penalty.

The activation function used in DeltaGN is ReLU, while HOGN uses SoftPlus. ReLU does not work with the HOGN model as its derivatives are either 0 or 1. The same activation functions are used in the modified hierarchical models and the hierarchy MLPs.

Overall architectures used for HOGN and DeltaGN models can be seen in Figures \ref{fig:DeltaGN_architecture} and \ref{fig:HOGN_architecture} respectively. They do not change for the hierarchical model versions, besides the fact that we use a hierarchical graph. Both architectures use a linear layer $\phi^\text{decode}$ to transform graph network output into the coordinate change or the Hamiltonian of the system.

For HOGN we use RK4 integrator:
\begin{linenomath*}
\begin{align*}
(\bm{q}_{t+1},\dot{\bm{q}}_{t+1})  = (\bm{q}_{t},\dot{\bm{q}}_{t}) + \frac{1}{6}\Delta t (\bm{k}_1 + 2\bm{k}_2 + 2\bm{k}_3 + \bm{k}_4)\; ,\\
\!\begin{aligned}[t]\bm{k}_1&=f((\bm{q}_{t},\dot{\bm{q}}_{t})) \; ,
&\bm{k}_2&=f((\bm{q}_{t},\dot{\bm{q}}_{t}) + \frac{1}{2} \Delta t \bm{k}_1) \; ,\\
\bm{k}_3&=f((\bm{q}_{t},\dot{\bm{q}}_{t}) + \frac{1}{2}  \Delta t \bm{k}_2) \; ,            &\bm{k}_4&=f((\bm{q}_{t},\dot{\bm{q}}_{t}) + \Delta t \bm{k}_3) \; ,\end{aligned}\!\end{align*}
\end{linenomath*}
where $f$ is our neural network. The same graphs are used for all of the internal RK4 steps.

\begin{figure}[ht]
    \centering
    \includegraphics[width=1.0\columnwidth]{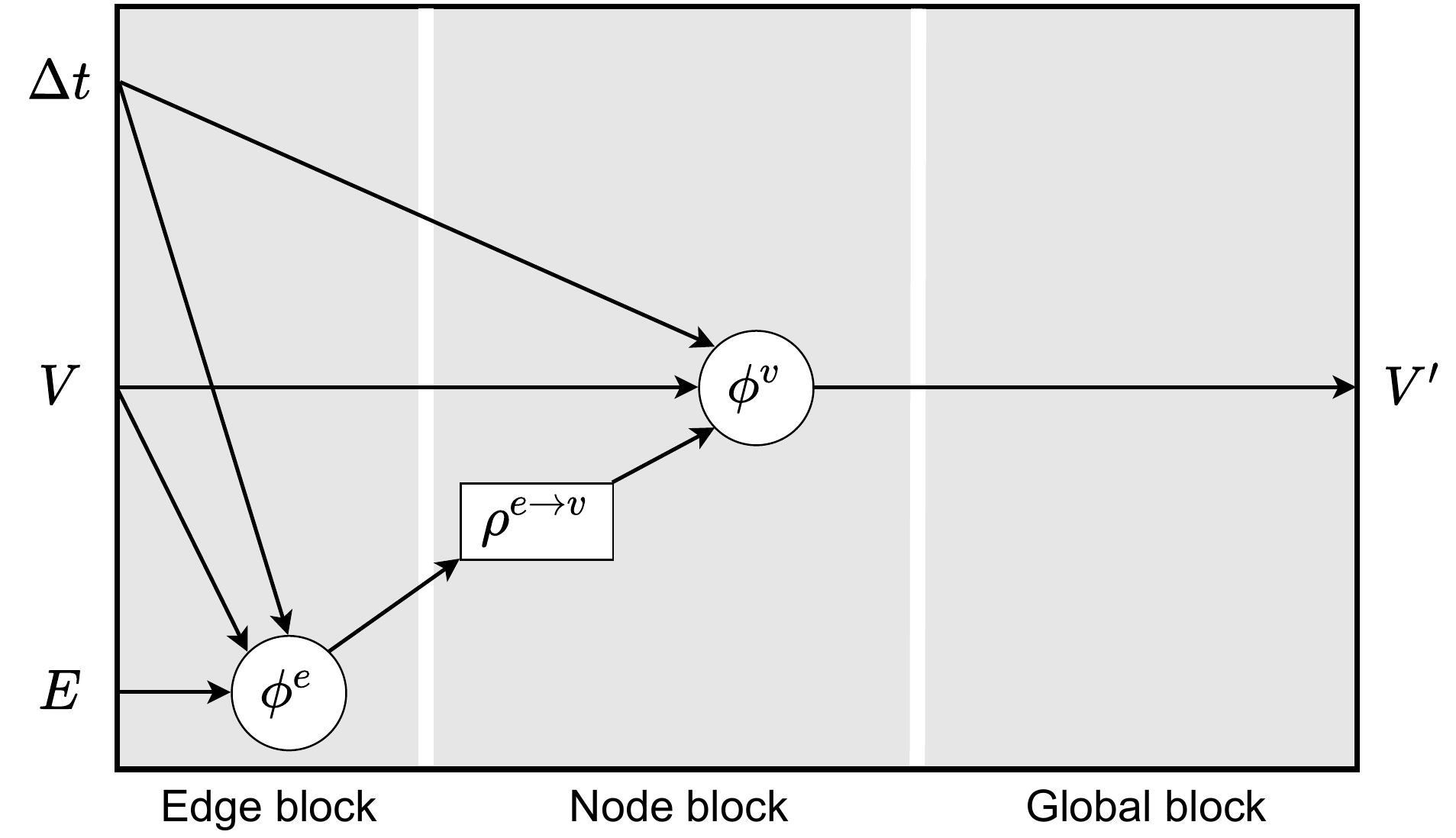}
    \caption{Graph network block scheme used in DeltaGN.}
    \label{fig:DeltaGN_block}
\end{figure}

\begin{figure}[ht]
    \centering
    \includegraphics[width=1.0\columnwidth]{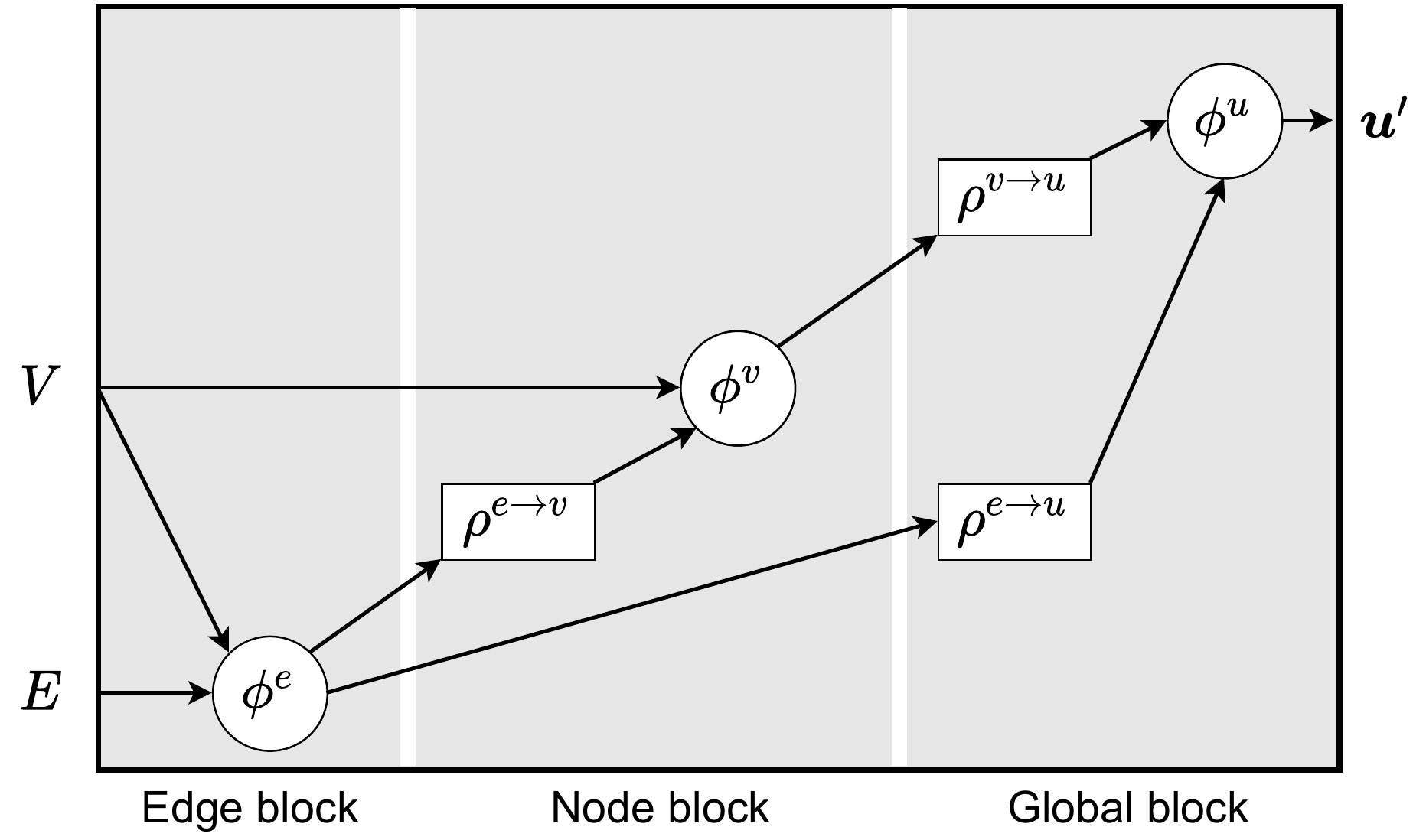}
    \caption{Graph network block scheme used in HOGN.}
    \label{fig:HOGN_block}
\end{figure}

\begin{figure}[ht]
    \centering
    \includegraphics[width=1.0\columnwidth]{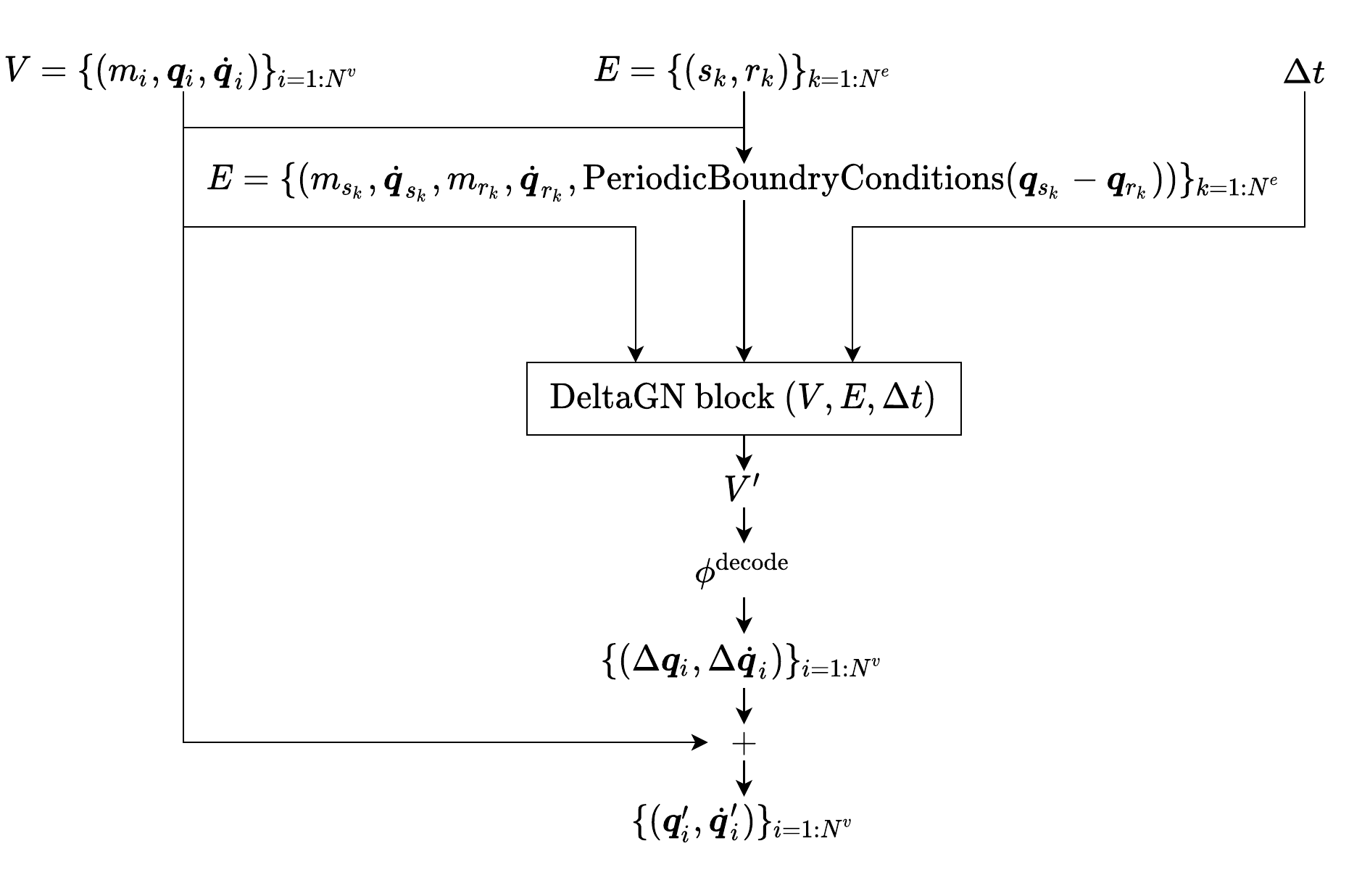}
    \caption{DeltaGN architecture.}
    \label{fig:DeltaGN_architecture}
\end{figure}

\begin{figure}[ht]
    \centering
    \includegraphics[width=1.0\columnwidth]{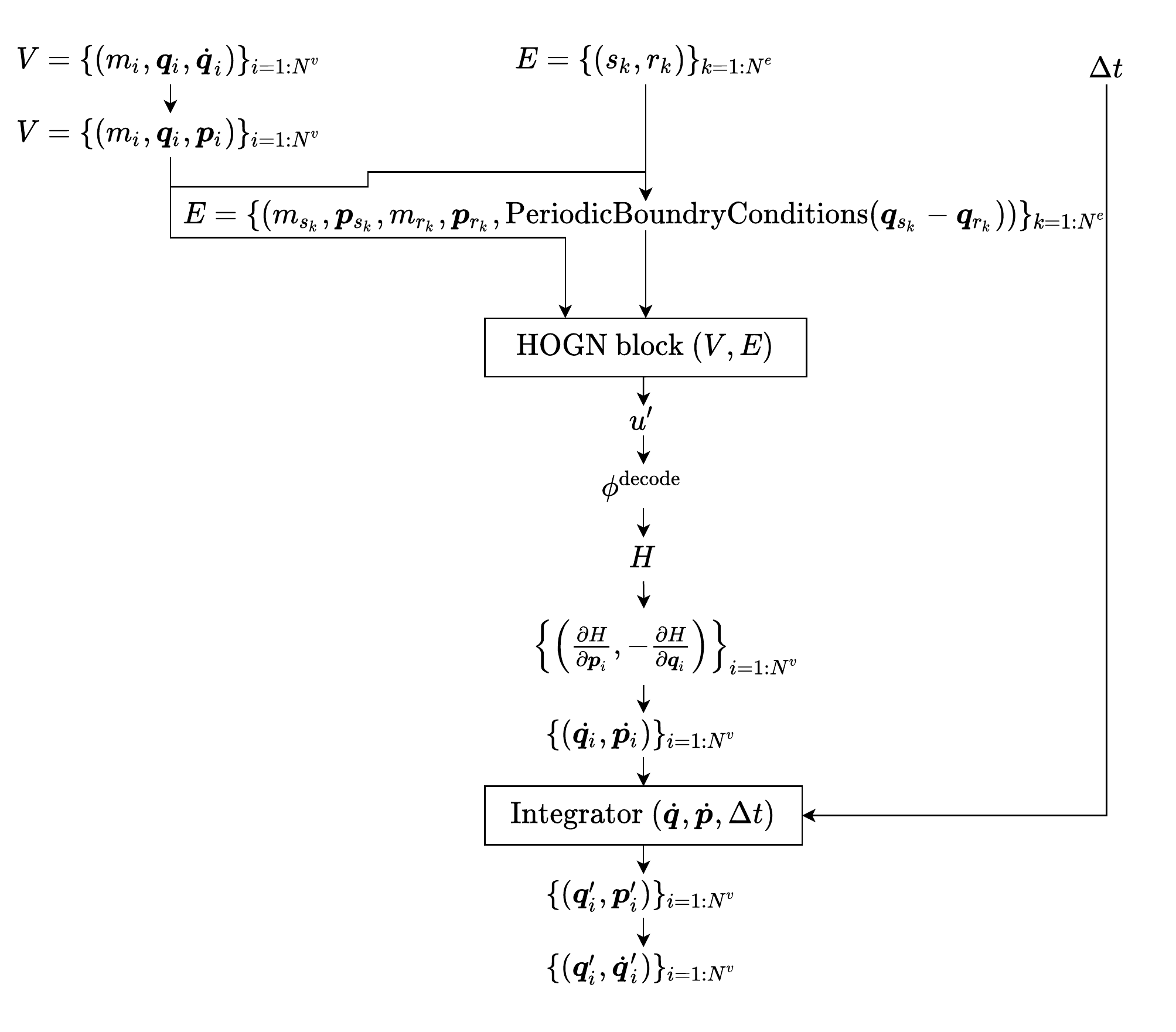}
    \caption{HOGN architecture.}
    \label{fig:HOGN_architecture}
\end{figure}

As discussed in the Adapting Existing Models section, when a hierarchical graph is used upward and downward passes must be performed first. The MLPs used in the upward pass ($\phi^{p\rightarrow c}$ and $\phi^{{c_c}\rightarrow {c_p}}$) have two hidden layers with 100 hidden units in each of them. Each layer is followed by an appropriate activation function (depending on if DeltaGN or HOGN architecture is used). MLPs used to compute interactions during the downward pass ($\phi^{c\rightarrow c}$ and $\phi^{{c_p}\rightarrow {c_c}}$) have two hidden layers with 150 hidden units in each of them. Each layer is followed by an appropriate activation function. The MLP that is used to update cell features during the downward pass ($\phi^{c}$) has three hidden layers with 100 hidden units in each of them. Each layer is again followed by an appropriate activation function. When the cell embeddings at the lowest level ($l = L-1$) are updated, we can finally construct the edge going from the parent cell to the particle, which represents all long-range interactions of the particle, using the $\phi^{{c}\rightarrow {p}}$ MLP. This MLP has two hidden layers with 150 hidden units in each of them. Each layer is followed by an appropriate activation function. How this edge is incorporated into DeltaGN and HOGN graph network blocks can be seen in Figures \ref{fig:down_block_DeltaGN} and \ref{fig:down_block_HOGN_global} respectively. The remaining block architecture is exactly the same as before.

\begin{figure}[ht]
    \centering
    \includegraphics[width=1.0\columnwidth]{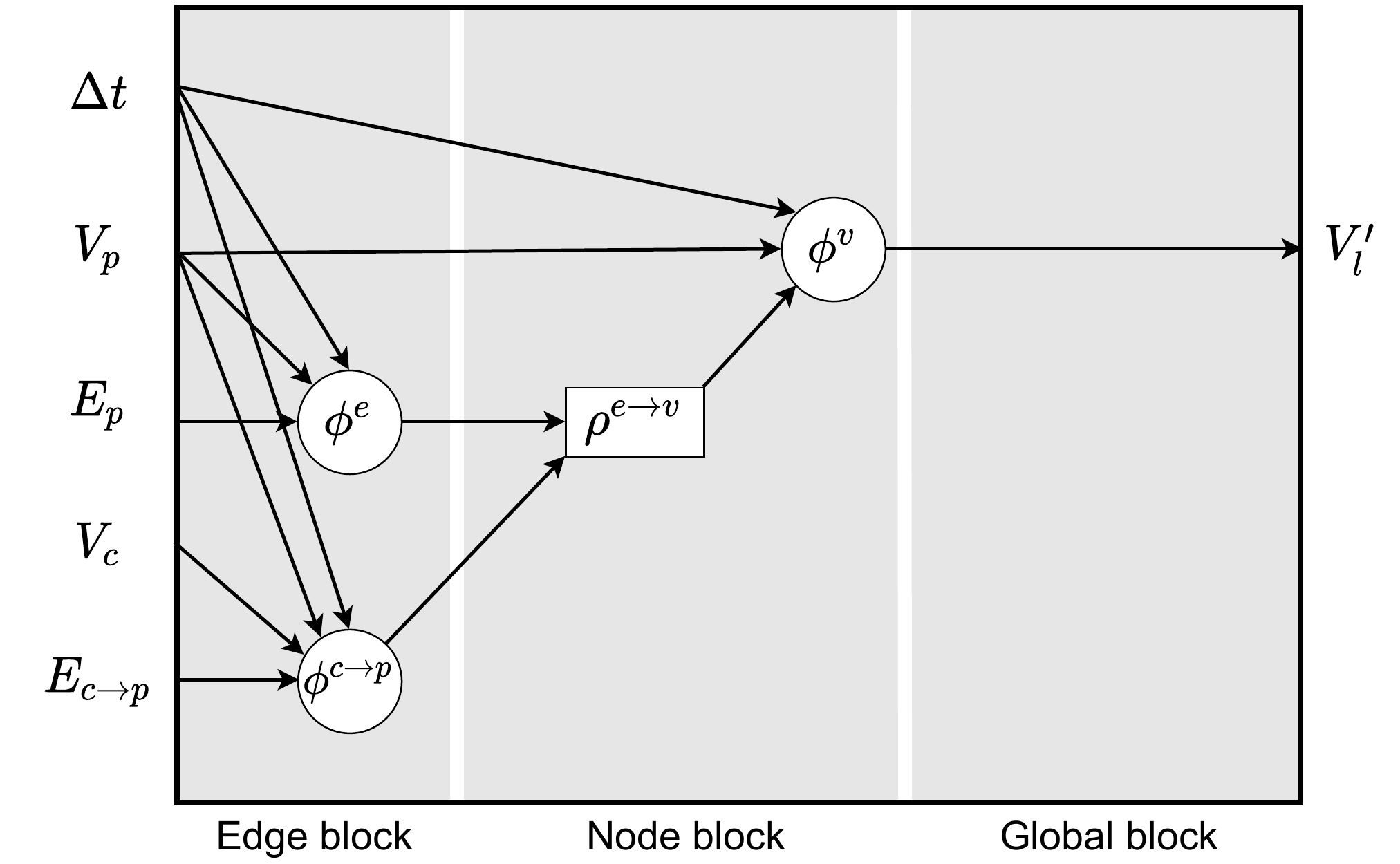}
    \caption{Graph network block scheme used in the hierarchical DeltaGN. $V_p$ and $E_p$ represent the nodes and edges of the sparse particle graph from the lowest level of the hierarchy ($l=L$). $V_c$ are the nodes of the parent cells ($l = L-1$), $E_{c \rightarrow p}$ are edges going from each cell to its child particles.}
    \label{fig:down_block_DeltaGN}
\end{figure}

\begin{figure}[ht]
    \centering
    \includegraphics[width=1.0\columnwidth]{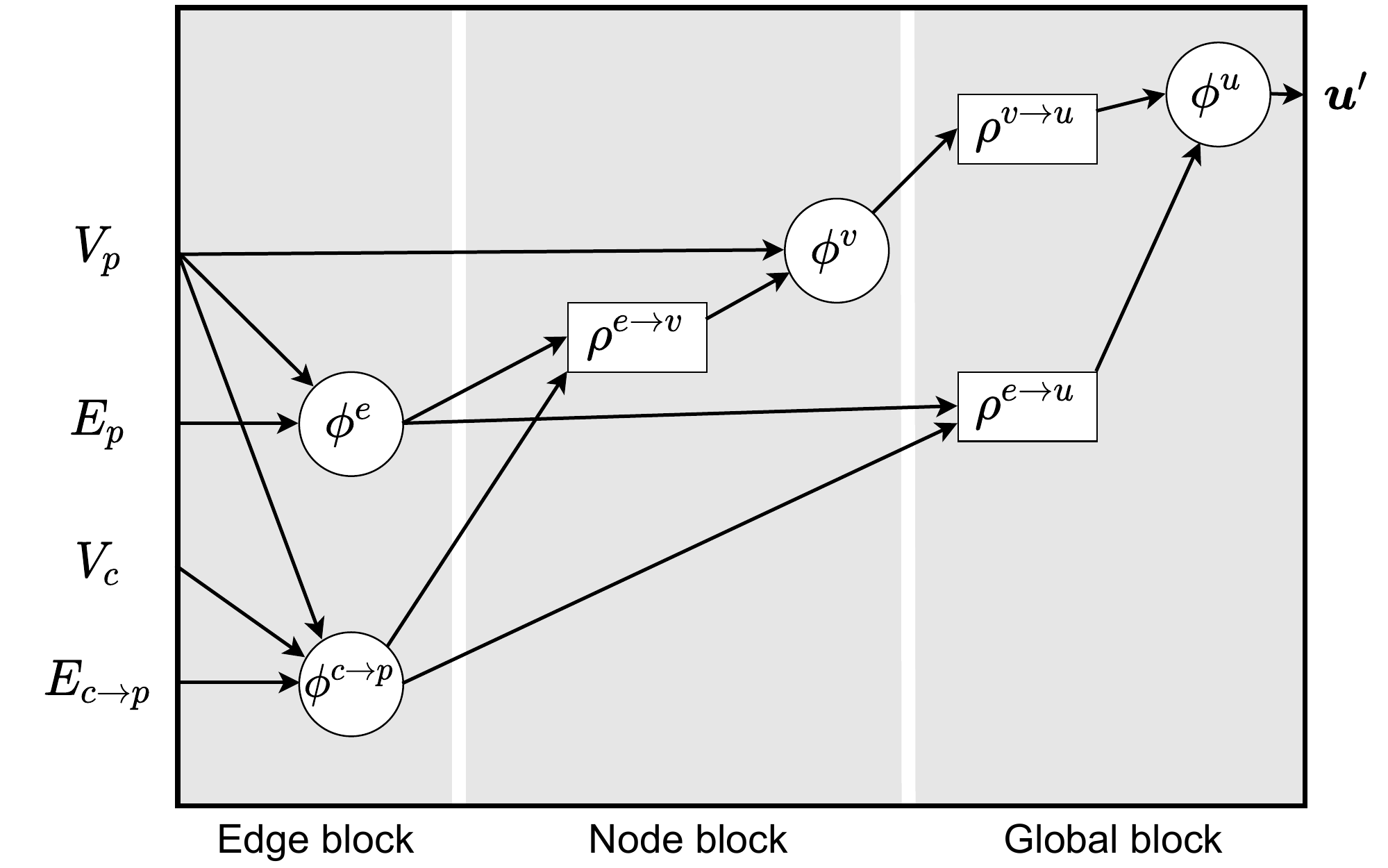}
    \caption{Graph network block scheme used in the hierarchical HOGN.  $V_p$ and $E_p$ represent the nodes and edges of the sparse particle graph from the lowest level of the hierarchy ($l=L$). $V_c$ are the nodes of the parent cells ($l = L-1$), $E_{c \rightarrow p}$ are edges going from each cell to its child particles.}
    \label{fig:down_block_HOGN_global}
\end{figure}

The training setup was described in the Results section of the main body. We did not set a fixed random seed for the models. We provide the mean error and the standard deviation when the models are run with different random seeds. As the variance is low, similar results should be observed on the datasets we will provide with any random seed.

The number of neighbours in the 15 nearest neighbour baseline was chosen such that the number of neighbours would be larger than the expected number of neighbours on the lowest level of our hierarchy (8) and would still offer good scalability.
% In N-body literature, it has been suggested that 15 nn is a decent choice for good accuracy and performance in 1000 particle simulations \cite{ahmad1973numerical}.

\section{Additional Results}
\label{appendix:results}

In this section, we provide the 200 step rollout graphs, which were not presented in the Results section. We also provide visualisations of the trajectories predicted by the different DeltaGN variants. An additional experiment shows, that $log(N)$ is the optimal number of hierarchical graph levels. We also show, what accuracy can be achieved in 24 hours of training by the different models.

\subsubsection{Small Batch Size - 200 Step Rollout.}
In Figure \ref{fig:batch_size_1_200} you can see the rollout RMSE and the energy error when trajectories are unrolled for 200 steps. The plots are analogous to the error plots presented in the Results section. The error increased roughly proportionally for all of the models when trajectories were unrolled for 200 steps instead of 20.

\subsubsection{Generalisation to Unseen Particle Counts - 200 Step Rollout.}
In Figure \ref{fig:transferability_200_steps} you can see the rollout RMSE and the energy error when trajectories are unrolled for 200 steps. Compared to the shorter 20 step rollout presented in the Results section, we see that when the trajectory is unrolled for more time steps, the error of the hierarchical DeltaGN increases less than the error of the DeltaGN (15 nn).

\begin{figure}[ht]
\centering
  \begin{subfigure}[t]{1.0\linewidth}
    \centering
    \includegraphics[width=1.0\linewidth]{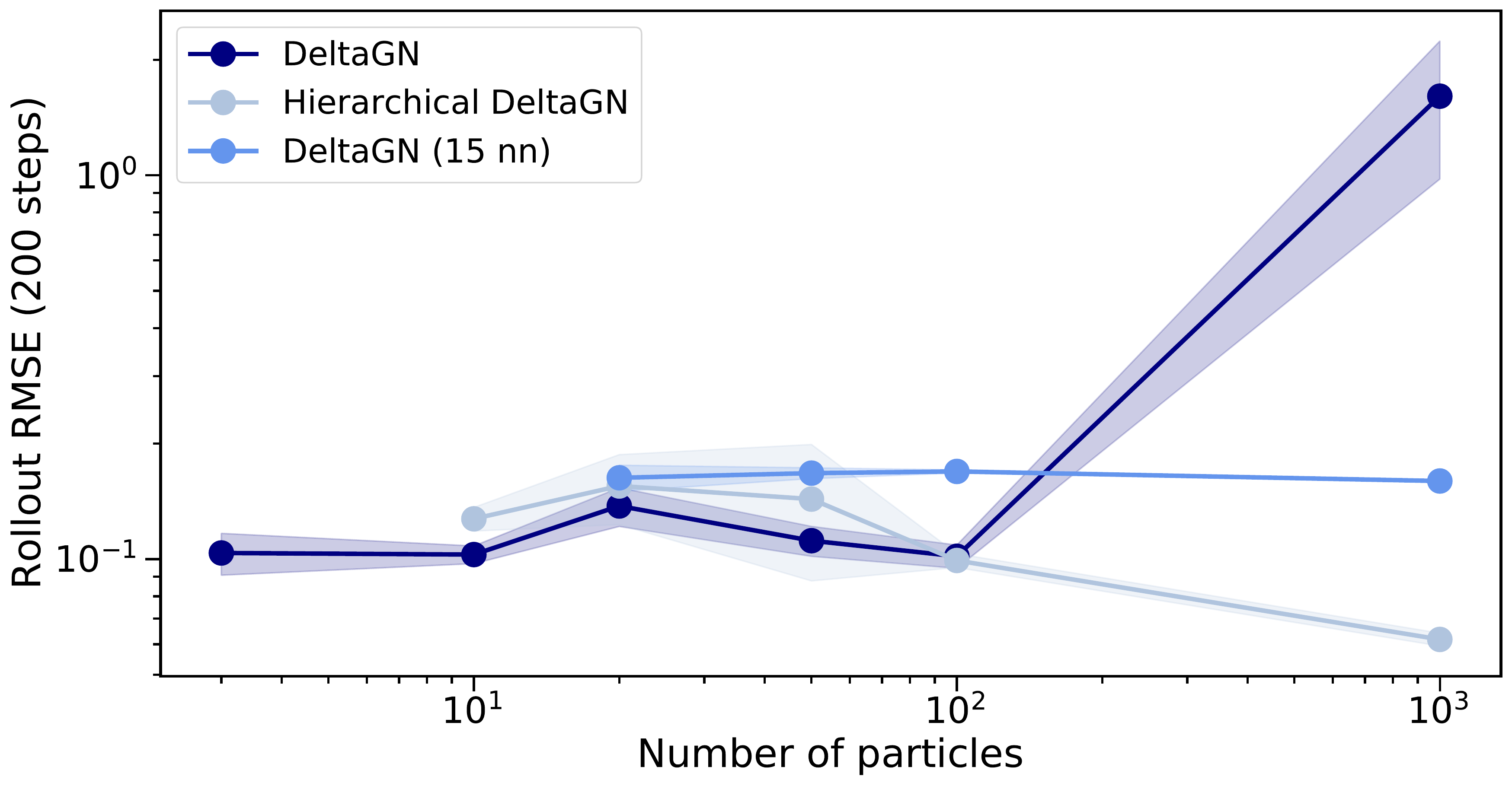}
    \vspace{-3.5ex}
    \caption{} 
    \label{batch_size_1_200:a} 
  \end{subfigure}
  \begin{subfigure}[t]{1.0\linewidth}
    \centering
    \includegraphics[width=1.0\linewidth]{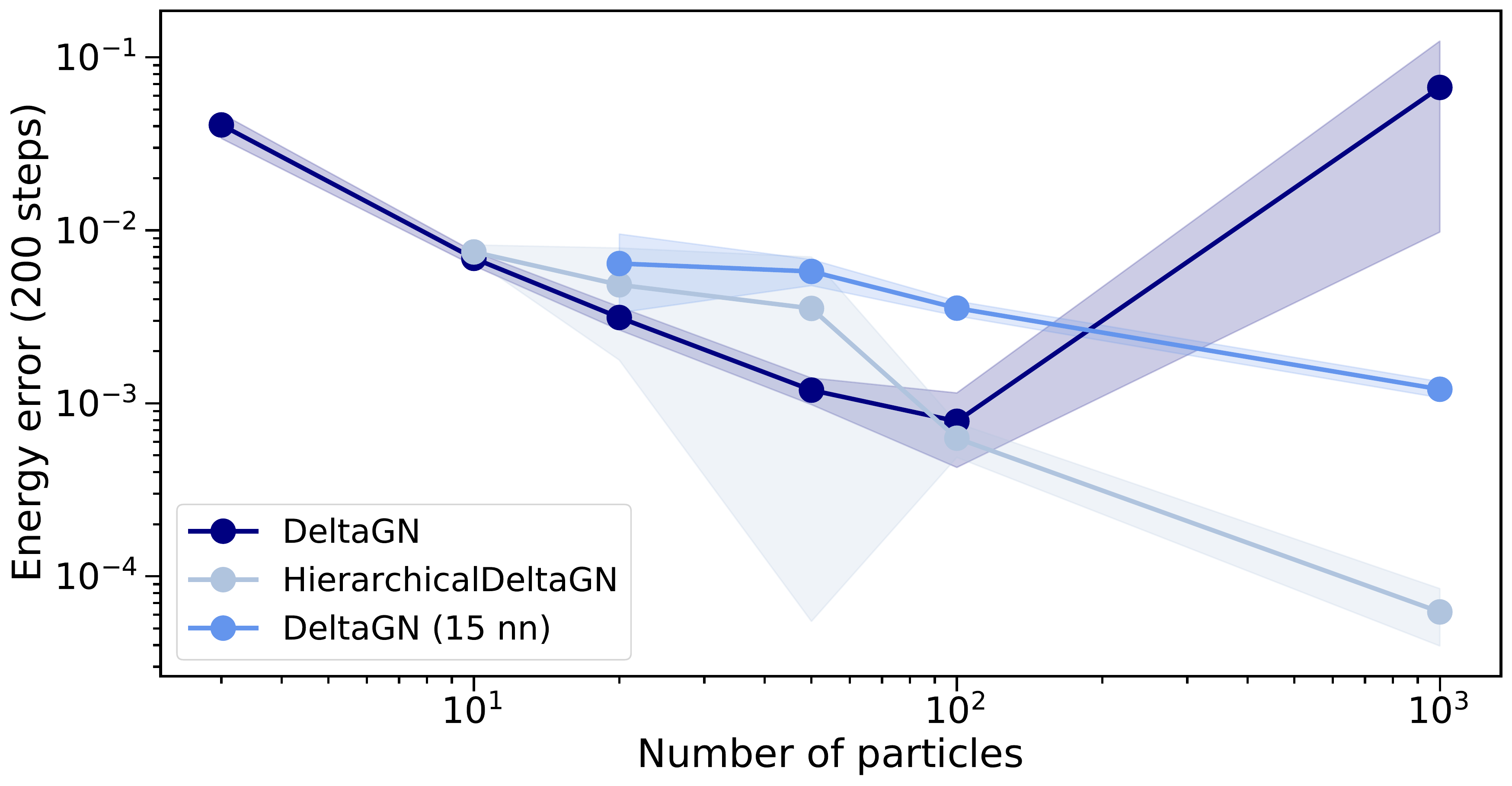}
    \vspace{-3.5ex}
    \caption{} 
    \label{batch_size_1_200:b} 
  \end{subfigure} 
  \caption{Models trained using a batch size of $1$. Their 200 step rollout RMSE (a) and energy error (b).}
  \label{fig:batch_size_1_200}
\end{figure}

\begin{figure}[ht]
    \centering
    \includegraphics[width=1.0\columnwidth]{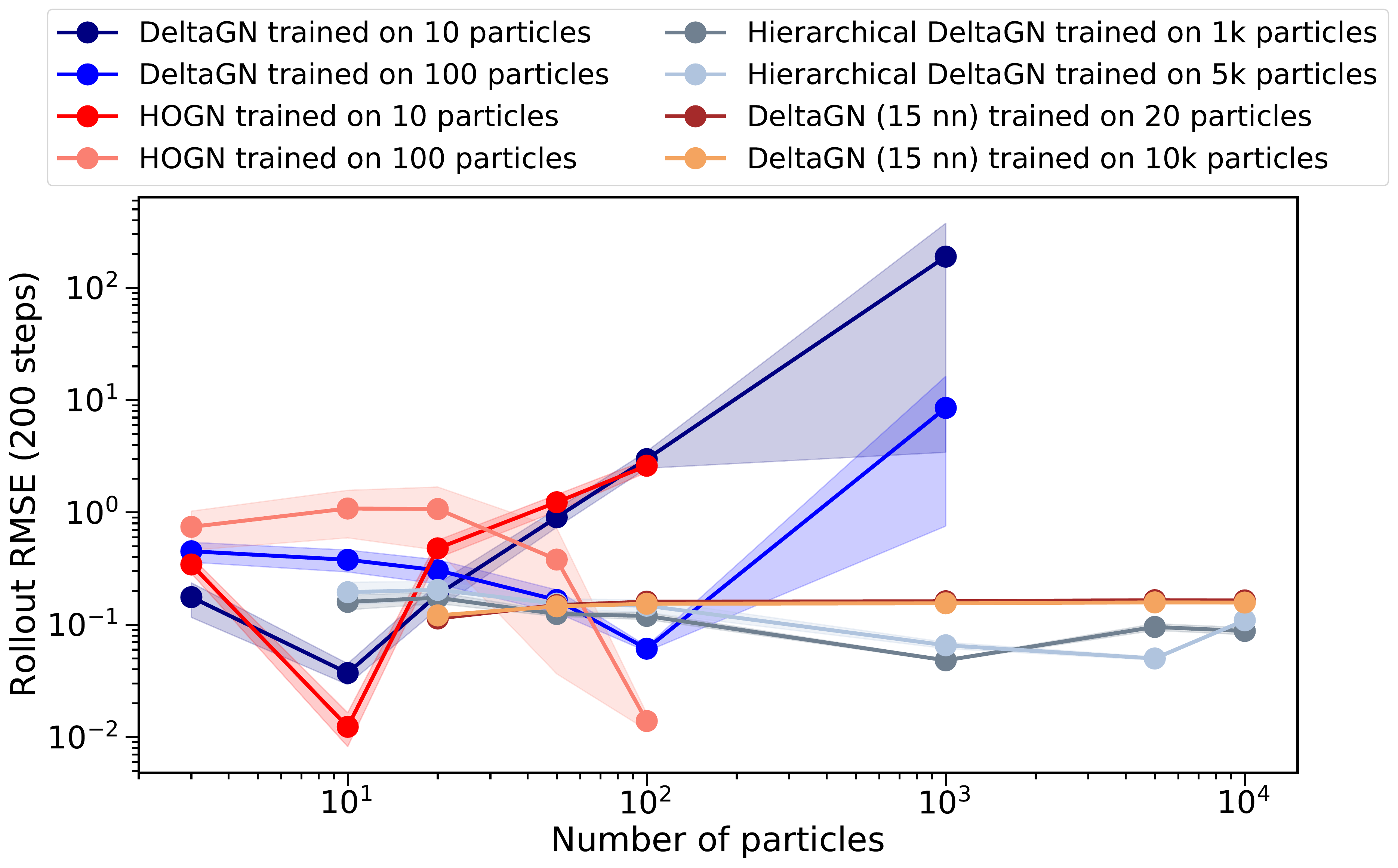}
    \caption{Models trained on a dataset with one particle count evaluated on datasets with different particle counts. 200 step rollout RMSE. HOGN was trained on 100 particle dataset using a batch size of 50. All other models were trained using a batch size of 100.}
    \label{fig:transferability_200_steps}
\end{figure}

\subsubsection{Optimal Number of Levels.}

We tested what impact the number of hierarchy levels has on accuracy and computation time. We trained multiple versions of hierarchical DeltaGN on the 100 particle dataset using 2, 3, 4 and 5 levels in the hierarchy. Note that $\log_4(100) = 3.32$ and we would normally use a 3 level hierarchy. From Table \ref{table:levels} we see that the 3 level hierarchy, in fact, provides the best combination of accuracy and speed. The model with 2 level hierarchy is slower because there are many more connections directly between the particles (in expectation 56 incoming edges per particle), while the models with 4 and 5 level hierarchies are slower because we need more steps to propagate the information through the hierarchy. These excess levels most likely are the cause of the worse accuracy which we see in case of the model that uses a 5 level hierarchy, while the poor accuracy of the model with a 2 level hierarchy might be caused by the hierarchy's MLPs not being trained as well. This is likely because the weights of the MLPs are shared for all of the hierarchy levels, except the particle level.
 
\begin{table*}[t!]
\centering
\resizebox{1.0\textwidth}{!}{\begin{tabular}{||c c c c c c c||} 
 \hline
 Model & \begin{tabular}{c}
  Hierarchy \\
  levels
 \end{tabular} &
  \begin{tabular}{c}
  RMSE $[10^{-3}]$\\
  (20 steps) 
 \end{tabular} &
  \begin{tabular}{c}
  Energy \\
  error $[10^{-5}]$\\
  (20 steps)
 \end{tabular}
 & \begin{tabular}{c}
  RMSE $[10^{-2}]$\\
  (200 steps) 
 \end{tabular} &
  \begin{tabular}{c}
  Energy\\
  error $[10^{-4}]$\\
  (200 steps)
 \end{tabular}  &
  \begin{tabular}{c}
  Forward
  pass \\ time \\
  (500k step avg.)
 \end{tabular}  \\ [0.5ex] 
 \hline\hline
 \begin{tabular}{c}
  Hierarchical \\
  DeltaGN 
 \end{tabular} & 2 levels & $1.706 \pm 0.184$ & $2.788 \pm 0.267$ & $6.612 \pm 0.476$ & $2.945 \pm 0.288$ & 28 ms  \\ \hline
 \begin{tabular}{c}
  Hierarchical \\
  DeltaGN 
 \end{tabular} & \textbf{3 levels} & $1.571 \pm 0.017$ &  $2.713 \pm 0.256$ & \boldmath{$6.118 \pm 0.147$} & \boldmath{$2.815 \pm 0.626$} &  \textbf{23 ms}  \\ \hline
 \begin{tabular}{c}
  Hierarchical \\
  DeltaGN 
 \end{tabular} & 4 levels & \boldmath{$1.545 \pm 0.023$} &  \boldmath{$2.621 \pm 0.091$} & $6.27 \pm 0.134$  & $2.910 \pm 0.236$ & 27 ms  \\ \hline
 \begin{tabular}{c}
  Hierarchical \\
  DeltaGN 
 \end{tabular} & 5 levels & $1.592 \pm 0.018$ &  $2.711 \pm 0.136$ & $6.680 \pm 0.123$  & $3.315 \pm 0.584$ & 30 ms \\[1ex] 
 \hline
\end{tabular}}
\caption{Test accuracy achieved by Hierarchical DeltaGN models with different numbers of hierarchy layers. Models were trained and evaluated on the 100 particle dataset. All models were trained using a batch size of 100. Optimal number of levels for 100 particles would be $\log_4(100) = 3.32$.}
\label{table:levels}
\end{table*}

\subsubsection{Performance Achievable in 24 hours.}

To further highlight the benefits of the hierarchical graph we train all of the versions of DeltaGN, as well as a hierarchical HOGN for 24 hours on the 1000 particle dataset. We use either batch size of 4 (which is maximum for DeltaGN) or maximum batch size possible for each model. HOGN runs out of memory even with a batch size of 1. From Table \ref{table:24hrs} we can see that DeltaGN (15 nn) allows for the biggest batch and is the fastest with the batch size of 4. However, its accuracy is much worse than that of the hierarchical models. While these hierarchical models still achieve good speed and much larger batch sizes than the DeltaGN which uses a fully connected graph.

\begin{table*}[t!]
\centering
\resizebox{0.75\textwidth}{!}{\begin{tabular}{||c c c c c||} 
 \hline
 Model & \begin{tabular}{c}
  Batch \\
  size
 \end{tabular} &
  \begin{tabular}{c}
  RMSE\\
  (20 steps)
 \end{tabular}
 & \begin{tabular}{c}
  RMSE\\
  (200 steps)
 \end{tabular} & 
 \begin{tabular}{c}
  Training \\
  steps
 \end{tabular} \\ [0.5ex] 
 \hline\hline
 DeltaGN & 4 & $0.2416 \pm 0.0362$ & NA & 239k \\  \hline
 DeltaGN (15 nn) & $4$ & $0.0086 \pm 2.15 \cdot 10^{-5}$  & $0.1565 \pm 2.35 \cdot 10^{-4}$ & \textbf{5.7M} \\  \hline
 \begin{tabular}{c}
  Hierarchical \\
  DeltaGN 
 \end{tabular} & 4 & \boldmath{$0.0014 \pm 2.73 \cdot 10^{-5}$} & \boldmath{$0.05119 \pm 0.0011$} & 1.24M \\ \hline
 \begin{tabular}{c}
  Hierarchical \\
  HOGN 
 \end{tabular} & 4 &  $0.0744 \pm 0.0948$  & $0.4156 \pm 0.3839$ & 118k  \\ \hline
 \hline
 DeltaGN & 4 & $0.2416 \pm 0.0362$  & NA & \textbf{239k} \\  \hline
 DeltaGN (15 nn) & \textbf{320} & $0.0095 \pm 4.95 \cdot 10^{-4}$  & $0.1794 \pm 0.0120$ & 151k \\  \hline
 \begin{tabular}{c}
  Hierarchical \\
  DeltaGN 
 \end{tabular} & 150 & \boldmath{$0.0043 \pm 0.0019$}  & \boldmath{$0.1176 \pm 0.0312$} & 93k \\ \hline % & 100 & $ 0.0031 \pm 0.0016$  & $0.0977 \pm 0.0286$ & 135k
 \begin{tabular}{c}
  Hierarchical \\
  HOGN 
 \end{tabular} & 20 & $ 0.0109 \pm 0.0099$ & $0.2096 \pm 0.1234$ & 50k \\[1ex] 
 \hline
\end{tabular}}
\caption{Model accuracy after 24 hours of training on the 1000 particle dataset. HOGN ran out of memory even with batch size of 1. In the upper half of the table, we have results when using batch size of 4 for all of the models. In the lower half, we have results when using maximum possible batch size for all of the models. NA means that during unroll for some of the runs particle velocities grew so large that float32 overflowed and model returned NaN values.}
\label{table:24hrs}
\end{table*}

\subsubsection{Predicted Trajectories.}

In this subsection we provide model predictions for one of the test trajectories from the 1000 particle gravitational or Coulomb dataset. Figures \ref{fig:1k_trajecotries_pred} and \ref{fig:1k_trajecotries_pred_energies} correspond to the models trained in the Scaling to Larger Particle Counts section, Figures \ref{fig:1k_trajecotries_pred_batch_1} and \ref{fig:1k_trajecotries_pred_batch_1_energies} correspond to the models trained in the Small Batch Size section, Figures \ref{fig:1k_coulomb} and \ref{fig:1k_coulomb_energies} correspond to the models trained in the Coulomb Interactions section. All of the provided predictions were made by the best corresponding model out of 5 runs.

From these trajectories, it is easy to see that DeltaGN does not learn the dynamics. While DeltaGN (15 nn) makes reasonable predictions, the lack of long-range interactions can result in large errors for some particles. For example, the particle highlighted by the red circle in Figure \ref{fig:1k_trajecotries_pred} is moving downwards instead of upwards in the trajectory predicted by DeltaGN (15 nn).

Note that the angular momentum is not conserved, because we use periodic boundary conditions \cite{Kuzkin_2014}.

\begin{figure*}[t]
    \centering
    \includegraphics[width=0.5935\textwidth]{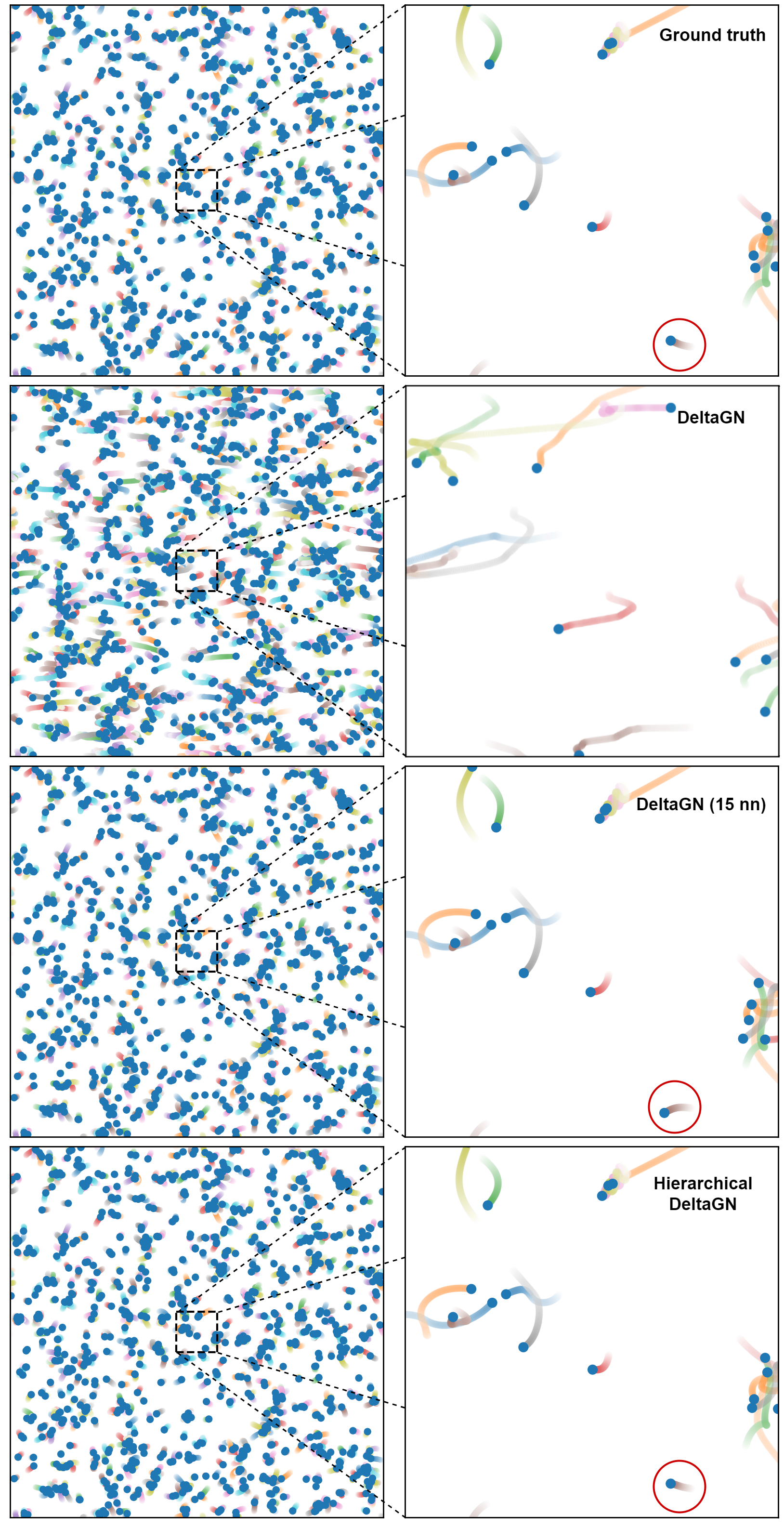}
    \caption{Sample simulation from the 1000 particle gravitational test set and the corresponding model predictions. On the right, we have a zoomed-in central region. In the first row, we show the ground truth simulation. In the second, third and fourth row we have the predictions produced by the best DeltaGN, DeltaGN (15 nn) and Hierarchical DeltaGN model runs respectively. DeltaGN was trained with a maximum possible batch size of 4, the other models used a standard batch size of 100. This plot corresponds to the models trained in the Scaling to Larger Particle Counts section.}
    \label{fig:1k_trajecotries_pred}
\end{figure*}

\begin{figure*}[t]
    \centering
    \includegraphics[width=0.575\textwidth]{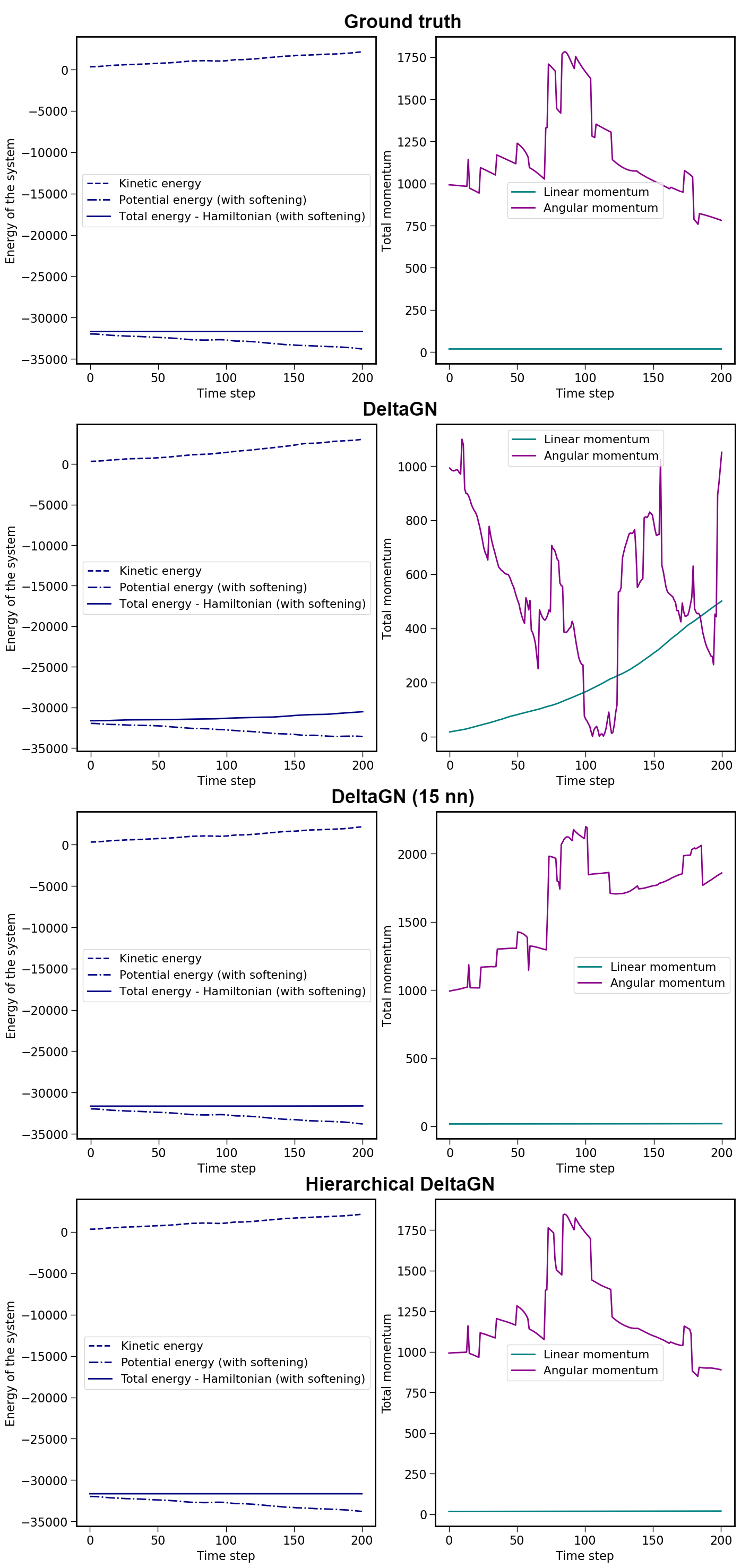}
    \caption{Energy (left column) and momentum (right column) plots corresponding to the sample simulation and predictions from Figure \ref{fig:1k_trajecotries_pred}.}
    \label{fig:1k_trajecotries_pred_energies}
\end{figure*}

\begin{figure*}[t]
    \centering
    \includegraphics[width=0.6\textwidth]{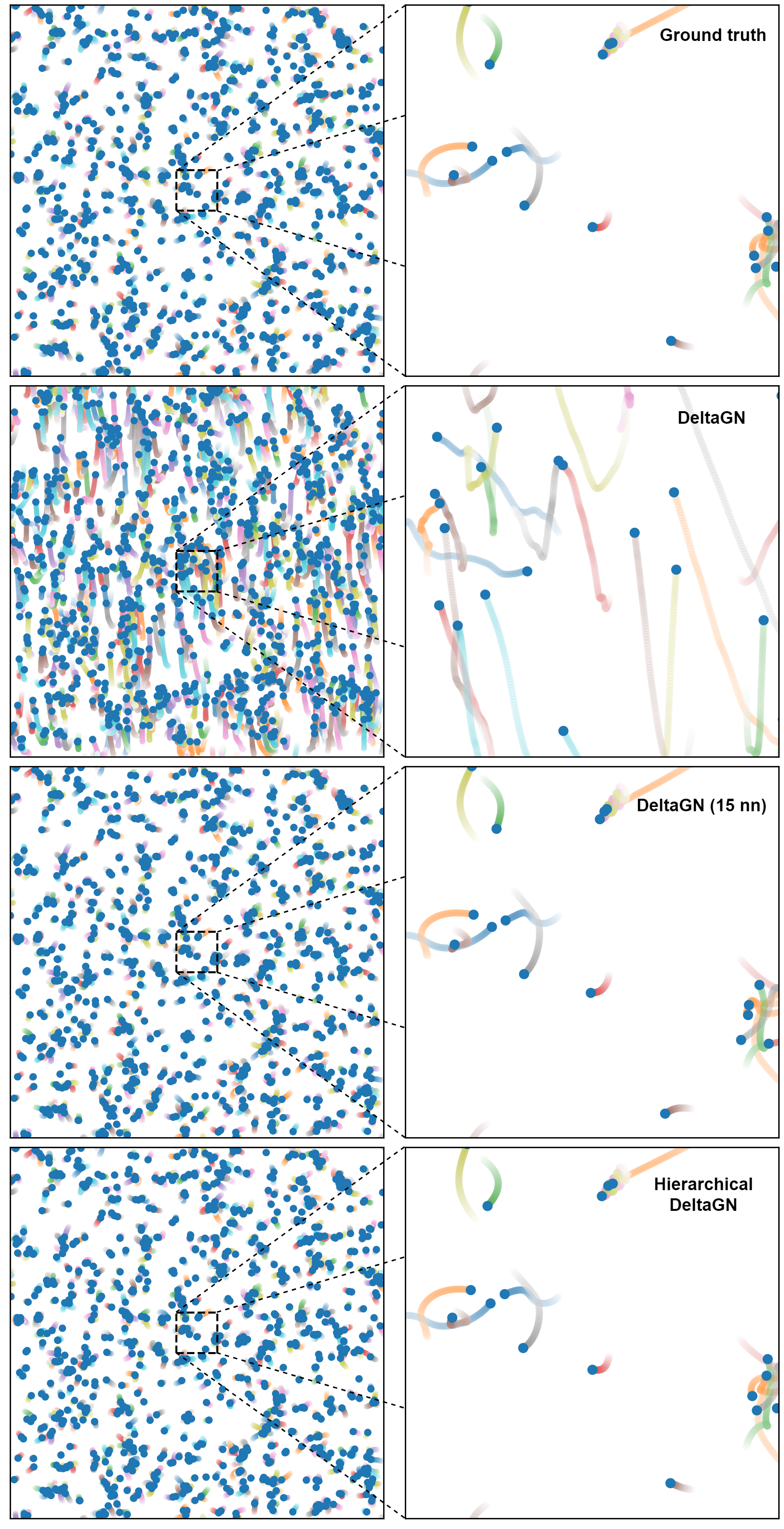}
    \caption{Sample simulation from the 1000 particle gravitational test set and the corresponding model predictions using a batch size of 1. On the right, we have a zoomed-in central region. In the first row, we have the ground truth simulation. In the second, third and fourth row we have the predictions produced by the best DeltaGN, DeltaGN (15 nn) and Hierarchical DeltaGN model runs respectively. This plot corresponds to the models trained in the Small Batch Size section.}
    \label{fig:1k_trajecotries_pred_batch_1}
\end{figure*}

\begin{figure*}[t]
    \centering
    \includegraphics[width=0.575\textwidth]{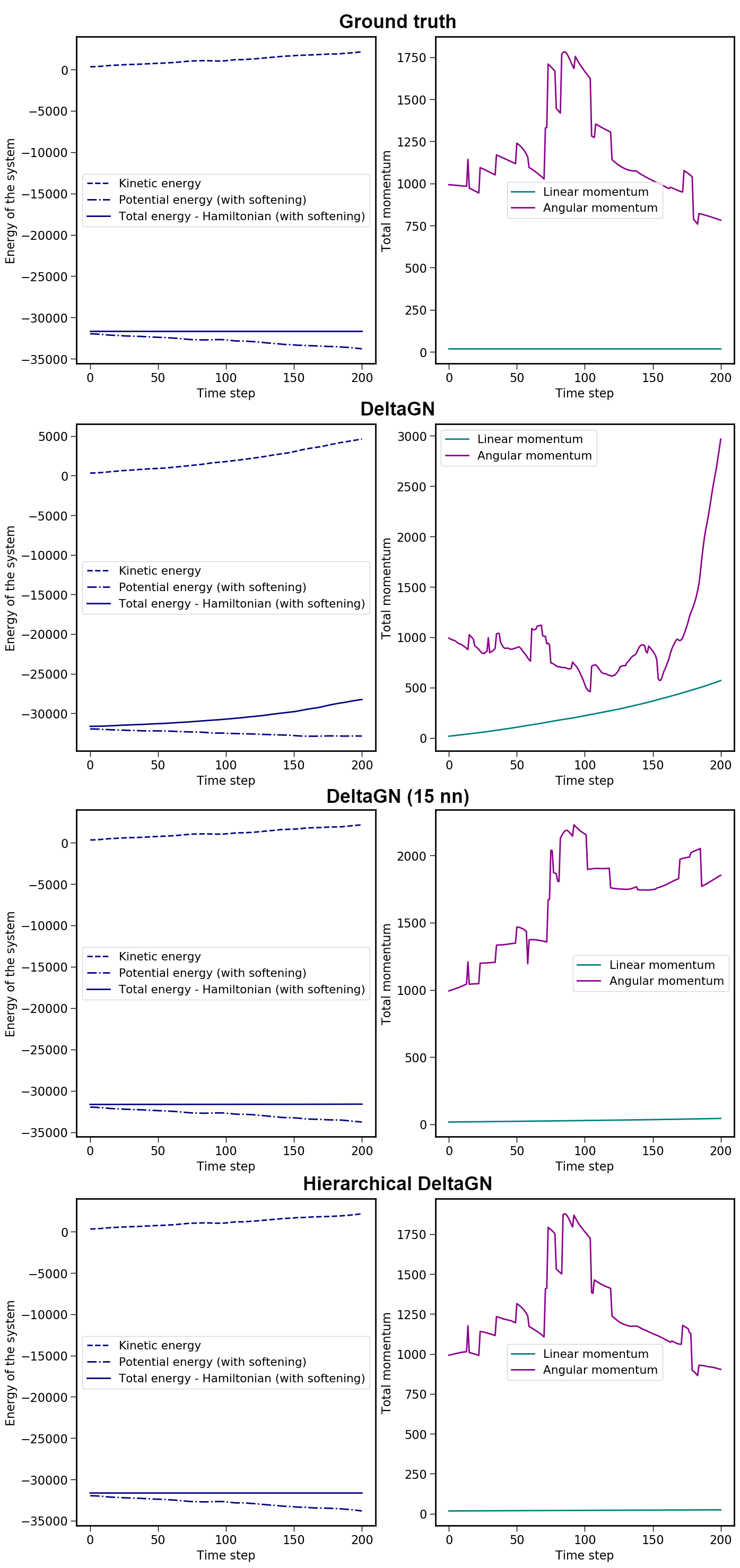}
    \caption{Energy (left column) and momentum (right column) plots corresponding to the sample simulation and predictions from Figure \ref{fig:1k_trajecotries_pred_batch_1}.}
    \label{fig:1k_trajecotries_pred_batch_1_energies}
\end{figure*}

\begin{figure*}[t]
    \centering
    \includegraphics[width=0.79\textwidth]{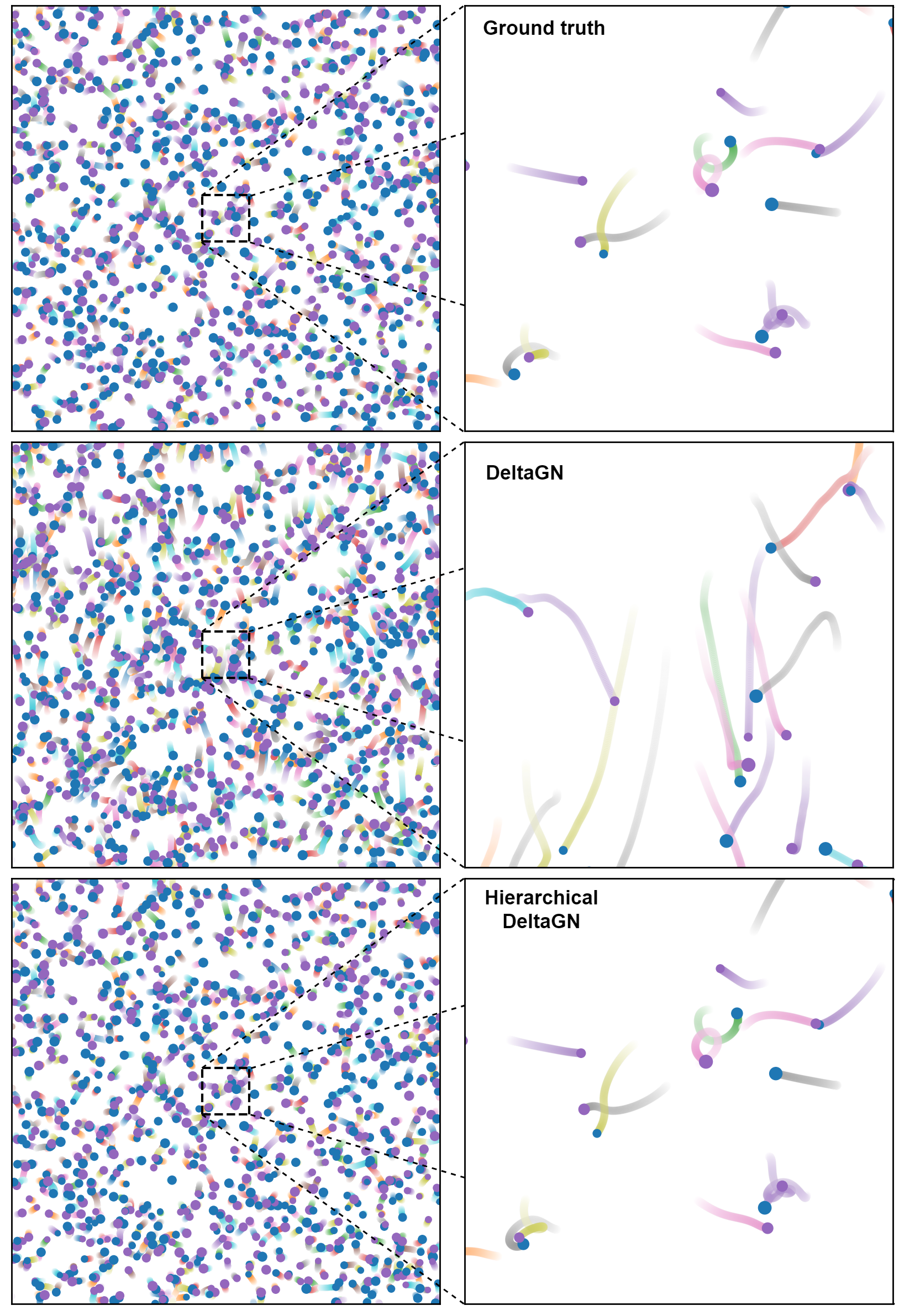}
    \caption{Sample simulation from the 1000 particle Coulomb test set and the corresponding predictions by the best runs of the DeltaGN and hierarchical DeltaGN models. DeltaGN used a batch size of 4 during training, while hierarchical DeltaGN used a batch size of 100. On the right, we have a zoomed-in central region. The particle size represents the magnitude of their charge. Blue particles are positively charged while violet particles are negatively charged. This plot corresponds to the models trained in the Coulomb Interactions section.}
    \label{fig:1k_coulomb}
\end{figure*}

\begin{figure*}[t]
    \centering
    \includegraphics[width=0.72\textwidth]{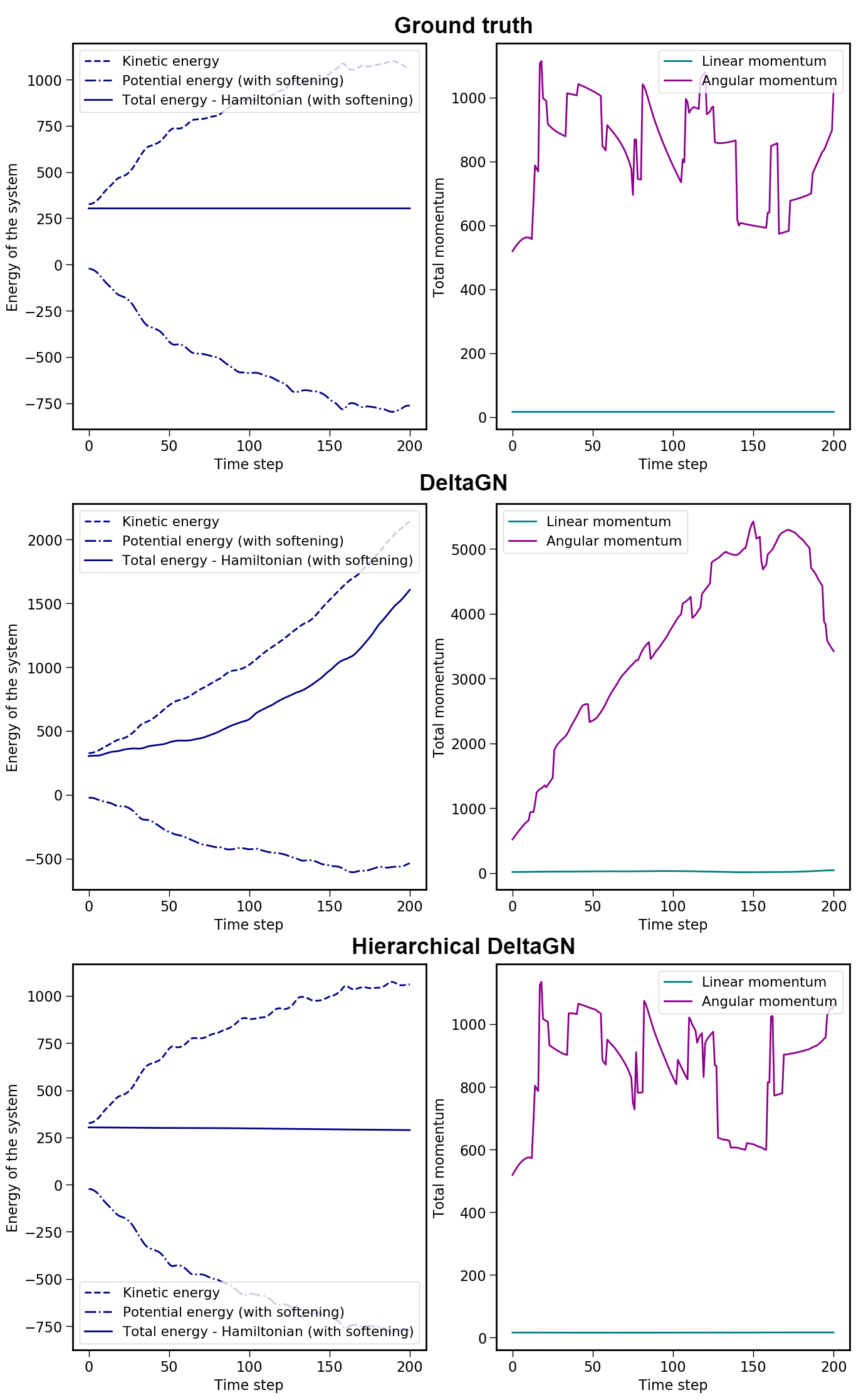}
    \caption{Energy (left column) and momentum (right column) plots corresponding to the sample simulation and predictions from Figure \ref{fig:1k_coulomb}.  This plot corresponds to the models trained in the Coulomb Interactions section.}
    \label{fig:1k_coulomb_energies}
\end{figure*}

\end{document}